\documentclass[12pt]{elsarticle}
\journal{}
\makeatletter
\def\ps@pprintTitle{%
 \let\@oddhead\@empty
 \let\@evenhead\@empty
 \let\@oddfoot\@empty
 \let\@evenfoot\@empty
}
\makeatother

\usepackage[utf8]{inputenc}

\usepackage{amssymb,amsmath}
\newtheorem{definition}{Definition}
\newtheorem{theorem}{Theorem}
\newtheorem{lemma}{Lemma}
\newtheorem{claim}{Claim}[lemma]

\newenvironment{proof}{\begin{trivlist}\item\noindent{\sc Proof.}}{\hfill$\Box\hspace{2mm}$\end{trivlist}}

\newenvironment{proof-of-claim}{\begin{trivlist}\item\noindent{\sc Proof of Claim.}}{\hfill $\boxtimes\hspace{2mm}$\end{trivlist}}

\renewcommand{\phi}{\varphi}
\usepackage{booktabs}
\usepackage{multirow}
\usepackage{makecell}
\usepackage{wrapfig}
\usepackage{graphicx}
\newcommand{\T}{{\sf T}}
\newcommand{\F}{{\sf F}}

\author{Junli Jiang
}
\address{Institute of Logic and Intelligence, Southwest University, Chongqing, China}
\ead{walk08@swu.edu.cn}
\author{Pavel Naumov}
\address{University of Southampton, Southampton, the United Kingdom}
\ead{pgn2@cornell.edu}

\begin{document}

\begin{frontmatter}
\title{Responsibility Gap and Diffusion in Sequential Decision-Making Mechanisms}

\begin{abstract}
Responsibility has long been a subject of study in law and philosophy. More recently, it became a focus of AI literature. 
The article investigates the computational complexity of two important properties of responsibility in collective decision-making: diffusion and gap. It shows that the sets of diffusion-free and gap-free decision-making mechanisms are $\Pi_2$-complete and $\Pi_3$-complete, respectively. At the same time, the intersection of these classes is $\Pi_2$-complete. 
\end{abstract}
\end{frontmatter}


\section{Introduction}

\subsection{Motivation}

From robotic surgical assistants and war robots to self-driving cars and automated stockbrokers, AI agents are increasingly contributing to significant decisions in everyday life. The outcomes of such decisions are often determined by the actions of multiple human and artificial agents.  To promote trustworthiness, reliability, and social acceptance of AI systems, it is important to establish individual accountability of all agents (human or otherwise) involved in collective decision-making\footnote{At the present time, the legal and moral responsibility for the actions of artificial agents should likely be passed on to their owners and manufacturers.}. To achieve this goal, we need to design collective decision-making mechanisms that promote individual accountability.

When designing such mechanisms, two distinct obstacles should be overcome: {\em responsibility gap} and {\em diffusion of responsibility}. The former refers to situations when an undesirable outcome happens and no single agent is responsible for it. The latter refers to similarly undesirable outcomes in which multiple agents are potentially responsible. Although diffusion of responsibility might appear not to be an obstacle but a blessing (the more agents to blame, the more accountability there is), it is not quite so. Diffusion of responsibility often leads to ``circle of blame'' and ``bystander effect''.

There are at least three different research directions associated with the decision-making mechanisms that promote individual accountability: existence, verification, and design. The first of them is to decide if there are mechanisms that completely avoid diffusion and/or gaps. The answer to this question is unquestionably yes: if a single agent (a ``dictator'') is solely making the decision, then this agent is the one and only one who is responsible for the outcome. Several existing papers studied, in various settings, whether non-dictatorial mechanisms can completely avoid a responsibility gap. Braham and van Hees~\cite{bh18ej} considered decision mechanisms in the ``discursive dilemma'' setting, where all agents simultaneously rank available options based on certain pre-determined criteria. They introduced a concept of ``fragmentation'' of responsibility, which is meant to capture the situations in which different parties are responsible for different aspects of the outcome. This notion is different from the diffusion of responsibility that we have discussed earlier. Braham and van Hees proved that, in the discursive dilemma setting, any gap-free mechanism that avoids fragmentation must be a dictatorship. Duijf and Van De Putte~\cite{dp22scw} gave a characterisation of gap-freeness in the discursive dilemma setting. 

Naumov and Tao~\cite{nt25ijcai} considered a more general class of concurrent decision-making mechanisms where the decision is made in multiple steps and all agents contribute concurrently (independently) at each step. They introduced a notion of an ``elected dictatorship'', which is such a concurrent decision-making mechanism that, along each decision path, there is a step at which one of the agents has a strategy to guarantee the positive and the negative outcome of the decision-making process. They proved that, in the perfect information case, gap-free mechanisms are exactly elected dictatorships. In the imperfect information setting, the class of gap-free mechanisms is strictly between what they call ``epistemic'' elected dictatorships and ``semi-epistemic'' elected dictatorships.

Although the above results may suggest that decision-making mechanisms can be gap-free only in special, degenerate cases, this conclusion holds only for the particular classes of mechanisms examined in those papers. As we will see in the current work, there are many non-trivial general case mechanisms that are gap-free and diffusion-free. The second research direction is to develop efficient procedures for verifying if a given mechanism admits a responsibility gap or diffusion of responsibility, or to prove that such procedures do not exist. The current article contributes to this direction. It considers a class of what we call ``sequential'' decision-making mechanisms where all agents act just once and in a fixed order. We show that even in this very restricted case, there are no efficient procedures for verifying that a mechanism is gap-free or diffusion-free. Namely, we prove that the class of gap-free mechanisms is $\Pi_3$-complete and the class of diffusion-free mechanisms is $\Pi_2$-complete. Note that, in general, the intersection of $\Pi_3$- and $\Pi_2$-hard sets is $\Pi_3$-hard. In spite of this, in this article, we show that the class of mechanisms that are simultaneously gap-free and diffusion-free is only $\Pi_2$-complete. This unexpected observation shows that gap-freeness and diffusion-freeness are two connected concepts. 

The third direction is actually designing decision-making mechanisms that promote individual accountability by eliminating (or at least minimising) the responsibility gap and diffusion. We leave this line of research for the future.

\subsection{Counterfactual Responsibility}

When the second author of this article was still a child, the author's uncle bought his first car. Soon after that, the uncle was approaching a road intersection near his home. The light turned yellow. Trying to be a law-abiding citizen, the uncle pushed the brakes. As his car started to slow down, he suddenly felt a force that 
smashed the rear part of the car, 
broke his neck, and
pushed his vehicle through the intersection. That force was\dots\ a lorry driving behind him. 

\begin{table}
\centering
    \begin{tabular}{c|cc}
        \toprule
\multirow{2}{*}{\makecell{\vspace{-2.5mm}\\Uncle}}    & \multicolumn{2}{c}{Lorry Driver}\\
        \cmidrule{2-3} 
    & brake & continue\\
        \midrule

         brake& collision & collision \\
        continue  \\
        \bottomrule
    \end{tabular}
\caption{Yellow Light mechanism}
\label{table:uncle}
\end{table}
Fortunately, the uncle recovered. But looking back, it is hard not to think that, as a nuclear physicist, he must have known that the lorry behind him had far greater inertia than his car—and that, whether the lorry's driver decided to brake at the yellow light or not, the driver could not have prevented the collision. In this article, we model this situation using a {\em decision mechanism} shown in Table~\ref{table:uncle}. This mechanism describes the outcome of the interaction of two {\em agents}: the uncle and the lorry driver. Both of them have two possible {\em actions}: to ``continue'' driving on yellow light or to ``brake''. By an {\em action profile} we mean one of four possible combinations of their actions. In the table, different cells correspond to different action profiles. As the table shows, the vehicles collide under action profiles where the uncle chooses to ``brake'', no matter what action is taken by the lorry driver.

In this article, we discuss the responsibility of agents. Responsibility is a broad term that has multiple meanings. It has been extensively studied by philosophers and legal scholars. 
The most commonly referred definition of responsibility in philosophy was suggested by  Frankfurt:
\begin{quote}
{\em \dots a person is morally responsible for what he has done only if he could have done otherwise.}\hfill--Frankfurt~\cite{f69tjop}    
\end{quote}
Frankfurt's definition is also known as the principle of ``alternative possibilities''~\cite{w17} or ``counterfactual'' possibilities~\cite{c15cop}. The same counterfactual approach could be used to define causality~\cite{lewis13,h16,bs18aaai}.  In this work, we will only consider Frankfurt's definition of responsibility. Following recent works in AI~\cite{ydjal19aamas,nt19aaai,nt20aaai,bfm21ijcai,s24aaai}, we interpret ``could have done otherwise'' as the person having a {\em strategy} (possible action) which {\em guarantees} the prevention of the outcome no matter how the other agents act. We refer to the responsibility defined this way as ``counterfactually responsibility'' or just ``responsibility''.

In our example, the uncle is counterfactually responsible for the collision because he would have guaranteed that the collision was avoided if he had chosen to ``continue'' driving on the yellow light, see Table~\ref{table:uncle}. At the same time, the lorry driver is not counterfactually responsible for the collision because there is no action that he could have taken that would have guaranteed that the collision would have been avoided.

\begin{table}
\centering
    \begin{tabular}{c|cc}
        \toprule
\multirow{2}{*}{\makecell{\vspace{-2.5mm}\\Factory A}}    & \multicolumn{2}{c}{Factory B}\\
        \cmidrule{2-3} 
   & pollute & don't pollute\\
        \midrule
 
        pollute & dead \\
        don't pollute &   &  \\
        \bottomrule
    \end{tabular}
\caption{Pollution mechanism}
\label{table:halpern}
\end{table}

\subsection{Diffusion of Responsibility}

In the discussed above Yellow Light mechanism, there is a {\em single responsible person} each time a collision happens. In many other cases, the situation is more complicated. Indeed, let us consider another example, originally proposed by Halpern~\cite{h16}. In this example, two factories, A and B, might decide to pollute a river. The fish in the river dies only if both factories choose to pollute, see Table~\ref{table:halpern}. Consider the action profile under which both factories simultaneously decide to pollute the river. As a result, the fish dies. In such a situation, {\em both} factories are counterfactually responsible for the death of the fish because each of them had a strategy (``don't pollute'') that would guarantee the survival of the fish no matter what the other company does, see Table~\ref{table:halpern}. When two different agents are responsible at the same time, we say that the decision mechanism admits {\em diffusion of responsibility}.  Diffusion of responsibility is a widely studied effect in social sciences~\cite{ms75pspb,fzg02pspb,llw22ichssr}, law~\cite{i20ijcr,rkfc22jrcd}, ethics~\cite{bb22aie}, and neuroscience~\cite{fdlglk16hbm}. It is an underlying cause of the ``bystander effect'' when multiple people observe something bad happening and leave it to others to intervene. In our example, each of the factories might decide to pollute and blame the other for the death of the fish, creating a ``circle of blame''. In general, the admissibility of diffusion is an undesirable property of a decision mechanism. If the mechanism does not allow diffusion of responsibility, no matter what actions are taken by the agents, then we say that the mechanism is {\em diffusion-free}. An example of a diffusion-free mechanism is the Yellow Light mechanism from Table~\ref{table:uncle}.

\subsection{Responsibility Gap}

In the two examples that we have considered so far, each time when something goes wrong, there is at least one responsible person. This is not true for our next example, which is a heavily modified version of the one suggested by Zultan, Gerstenberg, and Lagnado~\cite{zgl12cognition}. Imagine that a team consisting of Ann, Bob, and Charles
arrives at an international salsa competition. By the rules of the competition, at the opening ceremony, each team must be represented by a female and a male dancer wearing costumes of the same colour. Each of the team members has red, white, and blue dance costumes. Unfortunately, Ann (female), Bob (male), and Charles (male) forget to coordinate and appear at the opening in red, white, and blue costumes, respectively. As a result, they are disqualified from the competition. Note that, assuming that they pick dresses simultaneously (or at least independently), none of the team members has a strategy (choice of the dress colour) that would guarantee that the team is not disqualified, no matter what dresses are chosen by the other members of the team. Thus, none of the team members is counterfactually responsible for the disqualification. 
When none of the agents is (counterfactually) responsible, the decision mechanism has a {\em responsibility gap}.

The responsibility gap, also known as the {\em responsibility void}, has been discussed in the literature in many different contexts, but with essentially the same meaning. It is often used when discussing the ``moral agency'' of AI systems. If an AI system does not have a moral agency, then it cannot bear responsibility for its decision, which often creates a responsibility gap~\cite{m04eit,ct15pt,bhlmmp20ai,c20see,g20eit,sm21pt,t21pt,k22eit,o23pt,hv23synthese}. This term is also used in discussions that focus on group responsibility when it is hard to ascribe responsibility to an individual group member~\cite{l21pt,ygcsnj21ieee,ygsdjn21aamas,dy23as,ygsdjnr23ais}. Finally, similarly to the current article, this term is used in the context of collective decision-making mechanism design~\cite{bh11pq,d18pss,bh18ej,dp22scw,d22}. Shi~\cite{s24aaai} introduced the notion of a ``higher-order'' responsibility gap in a tree-like extensive game setting. 

In most situations, it is desirable to employ decision mechanisms that do not have responsibility gaps. We refer to such mechanisms as {\em gap-free}. The discussed earlier Yellow Light mechanism and the Pollution mechanism are gap-free.

\subsection{Decision Order}

A common way to reduce or totally eliminate the responsibility gap is to request that, instead of taking actions simultaneously (independently), the agents take them in a certain order. In such a setting, we assume that, when choosing an action, each agent knows all prior actions.

Suppose that, in our international salsa competition example, the choices are made in the order A, B, C. That is, Ann picks her costume first, then Bob, then Charles. Because Bob knows Ann's choice, he has a strategy to prevent disqualification by choosing a costume of the same colour as Ann's. Thus, under this order, Bob is counterfactually responsible for disqualification under each action profile that disqualifies their team. Hence, under the order A, B, C, the mechanism is gap-free. At the same time, Charles can also pick a costume of the same colour as Ann. Thus, he is also counterfactually responsible each time the team is disqualified. Therefore, the order A, B, C eliminates the responsibility gap, but it introduces the diffusion of responsibility. One can easily see that the order B, A, C also eliminates the responsibility gap and introduces the diffusion of responsibility (this time, between Ann and Charles).

Let us now consider the order B, C, A. Note that because the rules of the competition require a female and a male dancer dressed in costumes of the same colour, neither Bob nor Charles has a strategy to prevent disqualification. At the same time, Ann has such a strategy -- she needs to choose a costume that matches either Bob's or Charles' costume, which she can always do because she makes her choice last. Thus, under the order B, C, A, if the team is disqualified, then Ann is the only person counterfactually responsible. In other words, the order B, C, A eliminates the responsibility gap without introducing the diffusion of responsibility.

\subsection{Order is not a Panacea}

The introduction of an order in which the decisions are made does not always eliminate the responsibility gap. To observe this, let us consider the Clemency mechanism discussed in~\cite{s24aaai}. In the US, if all other legal means are exhausted, a prisoner on death row can petition for clemency. In most states, granting clemency is a joint decision of the state clemency board and the governor of the state. In many of these states, the board and the governor must both support clemency in order for the prisoner not to be executed, see Table~\ref{table:shi}.

\begin{table}
\centering
    \begin{tabular}{c|cc}
        \toprule
\multirow{2}{*}{\makecell{\vspace{-2.5mm}\\Board}}    & \multicolumn{2}{c}{Governor}\\
        \cmidrule{2-3} 
   & don't grant & grant\\
        \midrule
 
        don't support & executed & executed \\
        support & executed  &  \\
        \bottomrule
    \end{tabular}
\caption{Clemency mechanism}
\label{table:shi}
\end{table}
Imagine a situation when the board and the governor simultaneously decide not to support the clemency petition, and the prisoner is executed. Suppose that later a DNA analysis shows that the prisoner is innocent. Which of the two parties, the board or the governor, should be morally responsible for the death of an innocent person? 
One can argue that each of the two parties took the action that guaranteed the death of the prisoner and, thus, should be held responsible for the execution. This constitutes so-called ``seeing-to-it'' form of responsibility~\cite{b11jal,b11jpl,nt23apal} studied in STIT logic~\cite{bp90krdr,h01,h95jpl,hp17rsl,ow16sl}.
However, according to Frankfurt's principle of alternative possibilities, neither of them is counterfactually responsible because neither of the parties had a strategy that guarantees that the prisoner is not executed no matter what the other party does. Thus, this decision mechanism has a {\em responsibility gap}. In practice, the board makes its decision first; if the board supports the clemency petition, the case goes to the governor for the final decision. If the board makes the decision first, it has no strategy to prevent the execution of an innocent person. Thus, it is not counterfactually responsible for the execution. If the board has decided not to support the petition, then the governor has no power to overrule the board's decision. Hence, the governor is also not counterfactually responsible for the execution. As a result, each time the board does not support the petition, the prisoner is executed, and no party is responsible for this. Therefore, the introduction of the order ``the board, the governor'' does not eliminate the responsibility gap. The order ``the governor, the board'' does not eliminate the responsibility gap either due to the symmetry of the setting, see Table~\ref{table:shi}.   

The above example shows that the responsibility gap can not always be eliminated by introducing an order in which the decisions are taken. Note also that the diffusion of responsibility is not always {\em intoduced} by the order. As our Pollution example in Table~\ref{table:halpern} shows, sometimes the diffusion might already exist in the setting when all decisions are made simultaneously. If the diffusion exists in the simultaneous setting, it cannot be eliminated by an order. Indeed, if two distinct agents had strategies to prevent the undesired outcome in the simultaneous setting, then they still can use these strategies to achieve the same goal in the ``ordered'' setting.

\subsection{Contribution and Outline}\label{contribution section}

A crucial factor in the analysis of the computational complexity of mechanism design is the chosen way to represent the mechanism. In the case when the mechanism has $n$ agents, an {\em explicit} representation of the mechanism by a table like Table~\ref{table:uncle}--\ref{table:shi} can be exponential in terms of $n$. Shi studied tree-like explicit representations and observed that the presence of the gap can be verified in polynomial time~\cite{s24aaai}. 
Alternatively, a mechanism can be represented {\em symbolically} by a Boolean-valued formula with $n$ arguments. In many cases, the symbolic representation might be significantly more compact. When the results from~\cite{s24aaai} are applied to the symbolic representation of the mechanism, they give an exponential upper limit on the complexity of deciding if a mechanism is gap-free. 
We are not aware of any discussion in the literature on the complexity of verifying that a mechanism is diffusion-free.

In this article, we study the mechanism in {\em symbolic} representation. In such a setting, we investigate the complexity of deciding if a given decision mechanism, under a given order of agents, is diffusion-free and/or gap-free. We show that ``diffusion-free'' is $\Pi_2$-complete problem. At the same time, ``gap-free'' is a $\Pi_3$-complete problem. Recall that both diffusion-free and gap-free are desirable properties of the decision-making mechanisms. Thus, ideally, we would want a mechanism to be ``diffusion-and-gap-free''. As an intersection of a $\Pi_2$ and $\Pi_3$ sets, the ``diffusion-and-gap-free'' problem must belong to the class $\Pi_3$. However, as it turns out, it is not $\Pi_3$-complete. In fact, in this work, we show that ``diffusion-and-gap-free'' is a $\Pi_2$-complete problem. 

The rest of the article is organised as follows. First, we introduce terminology related to Boolean formulae. Then, we define the formal setting in which we will study the complexity of responsibility. In Section~\ref{Diffusion-Free Mechanisms section} and Section~\ref{Gap Problem section}, we study the complexity of the diffusion-free and gap-free classes, respectively. In  Section~\ref{Responsibility-Free Mechanisms section}, we introduce the auxiliary notion of ``responsibility-free'' mechanisms. Finally, in Section~\ref{Gap and Diffusion Problem section}, we prove that the ``diffusion-and-gap-free'' problem is $\Pi_2$-complete. The last section concludes.

\section{Technical Results}

\subsection{Boolean Formulae}

In this article, we denote Boolean {\em values} by 0 and 1. In the syntax, they are represented by constants $\F$ and $\T$. By a {\em Boolean formula} we mean an expression built out of propositional variables and constants $\F$ and $\T$ using propositional connectives: conjunction $\wedge$, disjunction $\vee$, negation $\neg$, and implication $\to$. A {\em quantified Boolean formula} can additionally use universal $\forall$ and existential $\exists$ Boolean quantifiers.

By a {\em valuation} we mean an arbitrary mapping of propositional variables into Boolean values $0$ and $1$. The Boolean value of a quantified Boolean formula under a valuation is defined recursively in the standard way. We say that quantified Boolean formulae $Q_1$ and $Q_2$ are {\em semantically equivalent} if they have the same value under each valuation and denote this by $Q_1\equiv Q_2$. For instance, $\forall p\neg (p\wedge q)\equiv\neg q$.

\subsection{Formal Setting}

In this subsection, we describe the formal setting in which we study diffusion-free and gap-free properties of the decision mechanisms. As mentioned in Section~\ref{contribution section}, an {\em explicit} representation of a mechanism by a table is exponentially large in terms of the number of agents. In this article, we use a {\em symbolic} representation. For example, for the Pollution mechanisms in Table~\ref{table:halpern}, we can represent the actions of Factory A and Factory B by Boolean values: $1$ (pollute) and $0$ (do not pollute). We use variable $v_a$ to denote the action taken by Factory A and variable $v_b$ to denote the action taken by Factory B. Note that Table~\ref{table:halpern} specifies a {\em deontic} (related to obligations) constraint on action profiles. In other words, it says which action profiles are not permissible because they kill the fish. In Table~\ref{table:uncle}, not permissible action profiles are those that result in a collision. In Table~\ref{table:shi}, not permissible\footnote{In this case, the constraint captures a moral obligation that exists under the assumption that the prisoner is innocent or under the assumption that the death penalty itself is immoral.} profiles lead to the execution of the prisoner.
In this article, we study the agent's responsibility for violating the deontic constraint. We express deontic constraint as a Boolean formula (without quantifiers). In the case of the Pollution mechanism, the deontic constraint can be captured by the Boolean formula $\gamma(v_a,v_b)=\neg (v_a\wedge v_b)$.

\begin{definition}\label{mechanism definition}
A (sequential decision-making) mechanism is a tuple $(n,\mathbf{v},\gamma)$, where 
\begin{enumerate}
    \item integer $n\ge 0$ is the number of ``agents'',
    \item $\mathbf{v}=\{\mathbf{v}_i\}_{1\le i\le n}$ is a family of disjoint ordered sets of Boolean variables,
    \item $\gamma$ is a ``deontic constraint'' Boolean formula without quantifiers whose variables belong to the set $\cup \mathbf{v}$.
\end{enumerate}
\end{definition}
Because our goal is to study diffusion responsibility and the responsibility gap for a fixed order, in the above definition, we essentially stipulate the order as a part of the formal definition of the mechanism. We represent agents by numbers 1 through $n$ and assume that they act in ascending order. In our Pollution mechanism example, each taken action was represented by a single Boolean variable ($v_a$ or $v_b$). To accommodate the mechanisms with agents that might have more than two actions, in the above definition, we encode each action by a set of Boolean variables. For example, in the Salsa Competition mechanism, the three possible actions (``red'', ``white'', ``blue'') of the first agent (Ann) can be represented by an ordered set $\mathbf{v}_1$ consisting of two Boolean variables: $\mathbf{v}_1=\{v^1_1,v^2_1\}$. These two Boolean variables can have $2\times 2=4$ possible combinations of values. To represent the three possible actions, two of these values could be assumed to represent the same action.

\begin{definition}
For any given mechanism $(n,\mathbf{v},\gamma)$, an {\bf\em action profile} is sequence $\mathbf{s}_1,\dots,\mathbf{s}_n$ of tuples of Boolean values  such that $\mathbf{s}_i\in\{0,1\}^{|\mathbf{v}_i|}$.   
\end{definition}

Intuitively, an action profile assigns an action to each agent. The tuple $\mathbf{s}_i$ in the above definition assigns a Boolean value to each Boolean variable in the ordered set $\mathbf{v}_i$. In this article, we use substitutions of the form $\phi[\mathbf{s}_i/\mathbf{v}_i]$, where $\phi$ is a Boolean formula with quantifiers. 
Formula $\phi[\mathbf{s}_i/\mathbf{v}_i]$ is a result of the replacement of each {\em free} occurrence of a variable from ordered set $\mathbf{v}_i$ in formula $\phi$ with the Boolean constant $\F$ if tuple $\mathbf{s}_i$ assigns value $0$ to this variable and constant $\T$ if it assigns value 1.  For example, $\forall p(p\to q)[\{0,1\}/\{p,q\}]$ is the formula $\forall p(p\to \T)$.
If $i_1,\dots,i_k$ are distinct agents, then by  
$\phi[\mathbf{s}_{i_1},\dots,\mathbf{s}_{i_k}/\mathbf{v}_{i_1},\dots,\mathbf{v}_{i_k}]$ we denote the Boolean formula 
$\phi[\mathbf{s}_{i_1}/\mathbf{v}_{i_1}]\dots[\mathbf{s}_{i_k}/\mathbf{v}_{i_k}]$.

\begin{definition}\label{responsibility}
For any given mechanism $(n,\mathbf{v},\gamma)$, an agent $i$ is {\bf\em responsible} under an action profile $\mathbf{s}_1,\dots,\mathbf{s}_n$ if 
\begin{enumerate}
    \item $\gamma[\mathbf{s}_1,\dots,\mathbf{s}_n/\mathbf{v}_1,\dots,\mathbf{v}_n]\equiv\F$,
    \item there is a tuple $\mathbf{t}\in\{1,0\}^{|\mathbf{v}_i|}$ such that $\gamma[\mathbf{s}'_1,\dots,\mathbf{s}'_n/\mathbf{v}_1,\dots,\mathbf{v}_n]\equiv\T$ for each action profile $\mathbf{s}'_1,\dots,\mathbf{s}'_n$ where
    \begin{enumerate}
        \item $\mathbf{s}'_j=\mathbf{s}_j$ for each agent $j<i$,
        \item $\mathbf{s}'_i=\mathbf{t}$.
    \end{enumerate}
\end{enumerate}
\end{definition}
Intuitively, item~1 of the above definition states that the deontic constraint is not satisfied. Item~2 states that agent $i$, acting after agents 1 through $i-1$, had a strategy to satisfy the constraint no matter what agents $i+1$ through $n$ would have decided to do after agent $i$ acts. 

It will be convenient to use notations $\exists \mathbf{v}_i\gamma$ and $\forall \mathbf{v}_i\gamma$ that combine multiple quantifiers into one. For example, if $\mathbf{v}_i=\{v^1_i,\dots,v^{|\mathbf{v}_i|}_i\}$, then $\exists \mathbf{v}_i\gamma$ is a short notation for $\exists v^1_i\dots\exists v^{|\mathbf{v}_i|}_i\gamma$. In the next two sections, we will be referring to formula ${\sf cf_i}$ defined as follows for each $i\le n$:
\begin{equation}\label{cf}
{\sf cf}_i=
\exists \mathbf{v}_i\forall \mathbf{v}_{i+1}\dots\forall \mathbf{v}_{n}\gamma.     
\end{equation}
Note that this formula contains bounded quantifiers over variables in ordered sets $\mathbf{v}_i$, $\mathbf{v}_{i+1}$, \dots, $\mathbf{v}_{n}$, leaving variables in ordered sets 
$\mathbf{v}_1$, \dots, $\mathbf{v}_{i-1}$ free. As a reminder of this, we will often write ${\sf cf}_i(\mathbf{v}_1,\dots, \mathbf{v}_{i-1})$ instead of~${\sf cf}_i$. Formula ${\sf cf}_i(\mathbf{v}_1,\dots, \mathbf{v}_{i-1})$ expresses the {\em counterfactual} statement: ``for the given values of $\mathbf{v}_1,\dots, \mathbf{v}_{i-1}$, agent $i$ has a strategy to guarantee that $\gamma$ has value $\T$ no matter what are the actions of the agents $i+1$, \dots, $n$''.

The next lemma follows from Definition~\ref{responsibility}. Note that the following formula: $(\neg \gamma \wedge {\sf cf}_i)[\mathbf{s}_1,\dots,\mathbf{s}_n/\mathbf{v}_1,\dots,\mathbf{v}_n]$ has no free propositional variables. Thus, it is semantically equivalent to either $\F$ or to $\T$.

\begin{lemma}\label{cf lemma}
$(\neg \gamma \wedge {\sf cf}_i)[\mathbf{s}_1,\dots,\mathbf{s}_n/\mathbf{v}_1,\dots,\mathbf{v}_n]\equiv\T$ 
if and only if
agent $i$ is responsible under an action profile $\mathbf{s}_1,\dots,\mathbf{s}_n$.    
\end{lemma}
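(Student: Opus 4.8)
The plan is to unwind Definition~\ref{responsibility} and match it, item by item, against the two conjuncts of $\neg\gamma\wedge{\sf cf}_i$, using only the standard semantics of Boolean quantifiers and the behaviour of substitution recalled after Definition~\ref{responsibility}. First I would observe that, since $\gamma$ is quantifier-free with all variables in $\cup\mathbf{v}$, the formula $(\neg\gamma\wedge{\sf cf}_i)[\mathbf{s}_1,\dots,\mathbf{s}_n/\mathbf{v}_1,\dots,\mathbf{v}_n]$ has no free variables, and a conjunction of closed formulae is semantically equivalent to $\T$ iff both conjuncts are. Hence the left-hand side of the lemma holds iff
\begin{equation*}
\gamma[\mathbf{s}_1,\dots,\mathbf{s}_n/\mathbf{v}_1,\dots,\mathbf{v}_n]\equiv\F
\qquad\text{and}\qquad
{\sf cf}_i[\mathbf{s}_1,\dots,\mathbf{s}_n/\mathbf{v}_1,\dots,\mathbf{v}_n]\equiv\T .
\end{equation*}
The first of these is verbatim item~1 of Definition~\ref{responsibility}, so it remains to identify the second with item~2.

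For the second condition, I would note that the quantifiers of ${\sf cf}_i$ bind exactly the variables in $\mathbf{v}_i,\dots,\mathbf{v}_n$; since substitution touches only free occurrences, ${\sf cf}_i[\mathbf{s}_1,\dots,\mathbf{s}_n/\mathbf{v}_1,\dots,\mathbf{v}_n]$ is literally the formula ${\sf cf}_i[\mathbf{s}_1,\dots,\mathbf{s}_{i-1}/\mathbf{v}_1,\dots,\mathbf{v}_{i-1}]$. Because the ordered sets $\mathbf{v}_1,\dots,\mathbf{v}_n$ are pairwise disjoint, this substitution commutes with the leading quantifier prefix, turning the formula into
\begin{equation*}
\exists\mathbf{v}_i\,\forall\mathbf{v}_{i+1}\cdots\forall\mathbf{v}_n\;\gamma[\mathbf{s}_1,\dots,\mathbf{s}_{i-1}/\mathbf{v}_1,\dots,\mathbf{v}_{i-1}] .
\end{equation*}
By the standard recursive semantics of the Boolean quantifiers, applied to the block $\exists\mathbf{v}_i$ and to each block $\forall\mathbf{v}_j$, this formula is equivalent to $\T$ iff there is a tuple $\mathbf{t}\in\{0,1\}^{|\mathbf{v}_i|}$ such that, for all tuples $\mathbf{s}'_{i+1},\dots,\mathbf{s}'_n$ of the appropriate lengths, $\gamma[\mathbf{s}_1,\dots,\mathbf{s}_{i-1},\mathbf{t},\mathbf{s}'_{i+1},\dots,\mathbf{s}'_n/\mathbf{v}_1,\dots,\mathbf{v}_n]\equiv\T$. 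Putting $\mathbf{s}'_j=\mathbf{s}_j$ for $j<i$ and $\mathbf{s}'_i=\mathbf{t}$, this is precisely item~2 of Definition~\ref{responsibility}. Combining the two conjuncts then yields the claimed equivalence in both directions.

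The argument is essentially bookkeeping, so I do not expect a serious obstacle; the one point that needs care is the interaction between the substitution operator (which acts only on free occurrences) and the bound quantifier prefix of ${\sf cf}_i$, together with the elementary passage from ``$\exists\mathbf{v}_i\,\psi$ evaluates to $\T$'' to ``$\psi[\mathbf{t}/\mathbf{v}_i]\equiv\T$ for some $\mathbf{t}$'' for a $\psi$ that becomes closed once $\mathbf{s}_1,\dots,\mathbf{s}_{i-1}$ are substituted. Both are instances of the standard facts about quantified Boolean formulae already in play, and the disjointness of the $\mathbf{v}_j$ guaranteed by Definition~\ref{mechanism definition} is exactly what makes the substitution and the quantifier prefix commute without any renaming of bound variables.
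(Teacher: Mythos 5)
Your proof is correct and follows exactly the route the paper intends: the paper offers no explicit proof, stating only that the lemma ``follows from Definition~\ref{responsibility}'', and your argument is precisely the straightforward unwinding of that definition (splitting the closed conjunction, matching $\gamma[\mathbf{s}_1,\dots,\mathbf{s}_n/\mathbf{v}_1,\dots,\mathbf{v}_n]\equiv\F$ with item~1, and using disjointness of the $\mathbf{v}_j$ plus quantifier semantics to match ${\sf cf}_i$ with item~2). Nothing is missing; your care about substitution acting only on free occurrences is exactly the bookkeeping the paper leaves implicit.
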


\subsection{Diffusion-Free Mechanisms}\label{Diffusion-Free Mechanisms section}

In this section, we define the class of diffusion-free mechanisms and show that it is $\Pi_2$-complete.

\begin{definition}\label{DF definition}
Set {\sf DF} (``diffusion-free'') contains all mechanisms $(n,\mathbf{v},\gamma)$  for which under each profile  $\mathbf{s}_1,\dots,\mathbf{s}_n$ such that $\gamma[\mathbf{s}_1,\dots,\mathbf{s}_n/\mathbf{v}_1,\dots,\mathbf{v}_n]\equiv\F$
there is at most one responsible agent.
\end{definition}

\begin{lemma}
Set {\sf DF} belongs to class $\Pi_2$.    
\end{lemma}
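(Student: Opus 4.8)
The plan is to show that membership in $\sf DF$ can be expressed by a $\Pi_2$ formula, i.e., a universally quantified statement whose matrix is itself expressible with a single block of existential quantifiers over polynomially many bits (equivalently, a $\Sigma_1$/NP-type predicate), so that the whole thing lands in $\Pi_2 = \Pi_2^p$-analogue of the polynomial hierarchy being referenced. First I would unwind Definition~\ref{DF definition}: a mechanism $(n,\mathbf{v},\gamma)$ is in $\sf DF$ iff for every action profile $\mathbf{s}_1,\dots,\mathbf{s}_n$ and every pair of distinct agents $i\ne j$, it is \emph{not} the case that both $i$ and $j$ are responsible under $\mathbf{s}_1,\dots,\mathbf{s}_n$. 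By Lemma~\ref{cf lemma}, agent $i$ is responsible under $\mathbf{s}_1,\dots,\mathbf{s}_n$ exactly when $(\neg\gamma\wedge{\sf cf}_i)[\mathbf{s}_1,\dots,\mathbf{s}_n/\mathbf{v}_1,\dots,\mathbf{v}_n]\equiv\T$. So $\sf DF$ is the set of mechanisms such that for all profiles and all $i<j$, at least one of $(\neg\gamma\wedge{\sf cf}_i)$ and $(\neg\gamma\wedge{\sf cf}_j)$ evaluates to $\F$ on that profile.

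The outer quantifier — "for all action profiles $\mathbf{s}_1,\dots,\mathbf{s}_n$ and all pairs $i<j$" — is a single universal block over polynomially many bits (the total number of variables in $\cup\mathbf{v}$ plus $O(\log n)$ bits to name the pair), so it contributes the leading $\forall$. The work is to argue that, with the profile and the pair fixed, the remaining predicate "neither agent $i$ nor agent $j$ is responsible here" is in $\Sigma_1$ (co-NP-type, actually, but one should check the direction carefully). The key observation is that once $\mathbf{s}_1,\dots,\mathbf{s}_n$ is fixed, ${\sf cf}_i[\mathbf{s}_1,\dots,\mathbf{s}_n/\mathbf{v}_1,\dots,\mathbf{v}_n]$ is a \emph{closed} quantified Boolean formula of the shape $\exists\mathbf{v}_i\forall\mathbf{v}_{i+1}\dots\forall\mathbf{v}_n\,\gamma'$, i.e., a $\Sigma_2$-QBF in general — which is too expensive. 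So the real content of the lemma is that we must \emph{avoid} evaluating ${\sf cf}_i$ directly. The trick: to witness that agent $i$ is \emph{not} responsible, or that the profile is consistent ($\gamma$ true), we only need to guess a small counterexample. Concretely, I would show that $\sf DF$ is equivalent to: for all profiles $\mathbf{s}_1,\dots,\mathbf{s}_n$ and all $i<j$, \emph{there exist} a completion $\mathbf{t}'_{i+1},\dots,\mathbf{t}'_n$ showing that none of agent $i$'s strategies works — but agent $i$'s strategy set $\{0,1\}^{|\mathbf{v}_i|}$ may itself be exponential, so this guessed witness must be a function, not a single value. That is exactly the point where naive guessing breaks down.

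The resolution, and the main obstacle, is to handle the innermost universal-over-strategies-of-agent-$i$ the right way. Here is how I would do it: negate and think of the complement. The mechanism is \emph{not} in $\sf DF$ iff there is a profile $\mathbf{s}_1,\dots,\mathbf{s}_n$ and distinct $i<j$ with $\gamma$ false on the profile, agent $i$ responsible, and agent $j$ responsible. "Agent $i$ responsible" unfolds to: $\gamma$ false on the profile \emph{and} $\exists\mathbf{t}$ such that $\forall$ completions of agents $i+1,\dots,n$, $\gamma$ is true. So the complement of $\sf DF$ is expressible as $\exists\,(\text{profile},\ i,\ j,\ \mathbf{t}_i\ \text{for }i,\ \mathbf{t}_j\ \text{for }j)\ \forall\,(\text{two completions of the tails})\ [\dots]$, a $\Sigma_2$ form, hence $\sf DF\in\Pi_2$. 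I would write the matrix explicitly: guess the profile prefix values $\mathbf{s}_1,\dots,\mathbf{s}_{\min(i,j)-1}$ (the common prefix), the two proposed winning moves $\mathbf{t}_i,\mathbf{t}_j$, and the agent indices; then the universal block ranges over all assignments to the variables of agents strictly after $i$ (resp. after $j$) together with the assignment $\mathbf{s}_k$ to agents strictly between the relevant prefix and $i$ that is used to witness $\neg\gamma$; and the quantifier-free part checks (a) there is a profile extending the prefix on which $\gamma$ is false, (b) $\gamma$ becomes true whenever agent $i$ plays $\mathbf{t}_i$ after the prefix, (c) likewise for $j$ and $\mathbf{t}_j$. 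Condition (a) is itself $\exists$ over a completion, but that existential can be absorbed into the outer $\exists$ block of the $\Sigma_2$ formula for the complement, so no blow-up occurs. The one delicate bookkeeping point — the step I expect to take the most care — is making sure the "prefix" over which agents $i$ and $j$ are evaluated is consistent: since $i<j$, agent $j$ sees agent $i$'s move, so the profile witnessing $\neg\gamma$ and joint responsibility must be a \emph{single} profile, and $\mathbf{t}_i$ must be compatible with the value $\mathbf{s}_i$ appearing in that profile only insofar as Definition~\ref{responsibility} requires (it does not — $\mathbf{t}_i$ replaces $\mathbf{s}_i$). Once the quantifier alternation is pinned down as $\exists\forall$ for the complement with a polynomial-size quantifier-free matrix, $\sf DF\in\Pi_2$ follows immediately.
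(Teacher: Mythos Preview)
Your proposal is correct and, once past the exploratory detours, lands on essentially the same argument as the paper: the complement of ${\sf DF}$ is $\Sigma_2$ because one existentially guesses the offending profile $\mathbf{s}_1,\dots,\mathbf{s}_n$, the pair $i\neq j$, and the two winning moves $\mathbf{t}_i,\mathbf{t}_j$, and then universally quantifies over the two tails to check that each $\mathbf{t}$ guarantees $\gamma$ (the $\neg\gamma$ check is quantifier-free once the full profile is guessed, so your worry about condition~(a) is moot). The paper simply packages this dually, writing the closed $\Pi_2$-formula $\forall\mathbf{v}_1\dots\forall\mathbf{v}_n\big(\bigvee_{i\neq j}({\sf cf}_i\wedge{\sf cf}_j)\to\gamma\big)$ and noting that each ${\sf cf}_i$ is $\Sigma_2$.
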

\begin{proof}
For any mechanism $(n,\mathbf{v},\gamma)$,
consider the following closed (without free variables) formula
\begin{equation}\label{df definition}
{\sf df}(n,\mathbf{v},\gamma)=\forall\mathbf{v}_1...
\forall\mathbf{v}_n\!\left(\bigvee_{i\neq j}
({\sf cf}_i(\mathbf{v}_1,..., \mathbf{v}_{i-1})\wedge
{\sf cf}_j(\mathbf{v}_1,..., \mathbf{v}_{j-1}))\to\gamma
\right).
\end{equation}

\begin{claim}
${\sf df}(n,\mathbf{v},\gamma)$ is a $\Pi_2$-formula.    
\end{claim}
\begin{proof-of-claim}
Observe from equation~\eqref{cf} that formula ${\sf cf}_i$ is a $\Sigma_2$-formula for each agent $i\le n$. Thus, the formula ${\sf cf}_i\wedge {\sf cf}_j$ is also a $\Sigma_2$-formula for each agent $i,j\le n$. Then, the formula $\neg({\sf cf}_i\wedge {\sf cf}_j)$ is a $\Pi_2$-formula for each agent $i,j\le n$. 
Hence, the formula $(\bigwedge_{i\neq j}\neg
({\sf cf}_i\wedge
{\sf cf}_j))\vee\gamma$
is also a $\Pi_2$-formula.
Thus, the formula $\bigvee_{i\neq j}
({\sf cf}_i\wedge
{\sf cf}_j)\to\gamma$ is a $\Pi_2$-formula
because it is semantically equivalent to the previous formula.
Therefore, the formula ${\sf df}(n,\mathbf{v},\gamma)$ is also a $\Pi_2$-formula. 
\end{proof-of-claim}

Recall that ${\sf df}(n,\mathbf{v},\gamma)$ is a closed formula. Thus, it is semantically equivalent to either $\F$ or $\T$.
\begin{claim}
$(n,\mathbf{v},\gamma)\in {\sf DF}$ iff ${\sf df}(n,\mathbf{v},\gamma)\equiv\T$.
\end{claim}

\begin{proof-of-claim}
$(\Rightarrow):$ Suppose  ${\sf df}(n,\mathbf{v},\gamma)\equiv\F$. Thus, by equation~\eqref{df definition}, there is an action profile $\mathbf{s}_1,\dots,\mathbf{s}_n$ such that 
$$
\left(\bigvee_{i\neq j}
({\sf cf}_i(\mathbf{v}_1,\dots, \mathbf{v}_{i-1})\wedge
{\sf cf}_j(\mathbf{v}_1,\dots, \mathbf{v}_{j-1}))\to\gamma
\right)[\mathbf{s}_1,\dots,\mathbf{s}_n/\mathbf{v}_1,\dots,\mathbf{v}_n]\equiv\F.
$$
Hence, there must exist $i,j\le n$ such that $i\neq j$ and:
\begin{align*}
{\sf cf}_i[\mathbf{s}_1,\dots,\mathbf{s}_n/\mathbf{v}_1,\dots,\mathbf{v}_n]\equiv\T,\\
{\sf cf}_j[\mathbf{s}_1,\dots,\mathbf{s}_n/\mathbf{v}_1,\dots,\mathbf{v}_n]\equiv\T,\\
\gamma [\mathbf{s}_1,\dots,\mathbf{s}_n/\mathbf{v}_1,\dots,\mathbf{v}_n]\equiv\F.
\end{align*}
Thus, by Lemma~\ref{cf lemma}, agents $i$ and $j$ are both responsible under the action profile $\mathbf{s}_1,\dots,\mathbf{s}_n$ of the mecahism $(n,\mathbf{v},\gamma)$. Therefore, $(n,\mathbf{v},\gamma)\notin {\sf DF}$ by Definition~\ref{DF definition} and the assumption $i\neq j$.

\noindent
$(\Leftarrow):$ Suppose that $(n,\mathbf{v},\gamma)\notin {\sf DF}$. Thus, by Definition~\ref{DF definition}, there is an action profile  $\mathbf{s}_1,\dots,\mathbf{s}_n$ of the mechanism $(n,\mathbf{v},\gamma)$ and agents $i,j\le n$ such that $i\neq j$ and both agents are responsible under the profile $\mathbf{s}_1,\dots,\mathbf{s}_n$. Hence, by Lemma~\ref{cf lemma},
$$(\neg \gamma \wedge {\sf cf}_i)[\mathbf{s}_1,\dots,\mathbf{s}_n/\mathbf{v}_1,\dots,\mathbf{v}_n]\equiv\T
\text{ and }
(\neg \gamma \wedge {\sf cf}_j)[\mathbf{s}_1,\dots,\mathbf{s}_n/\mathbf{v}_1,\dots,\mathbf{v}_n]\equiv\T.
$$ Thus,
\begin{align*}
{\sf cf}_i[\mathbf{s}_1,\dots,\mathbf{s}_n/\mathbf{v}_1,\dots,\mathbf{v}_n]\equiv\T,\\
{\sf cf}_j[\mathbf{s}_1,\dots,\mathbf{s}_n/\mathbf{v}_1,\dots,\mathbf{v}_n]\equiv\T,\\
\gamma [\mathbf{s}_1,\dots,\mathbf{s}_n/\mathbf{v}_1,\dots,\mathbf{v}_n]\equiv\F.
\end{align*}
Then,
$$
\left(\bigvee_{i\neq j}
({\sf cf}_i(\mathbf{v}_1,\dots, \mathbf{v}_{i-1})\wedge
{\sf cf}_j(\mathbf{v}_1,\dots, \mathbf{v}_{j-1}))\to\gamma
\right)[\mathbf{s}_1,\dots,\mathbf{s}_n/\mathbf{v}_1,\dots,\mathbf{v}_n]\equiv\F.
$$
Therefore, ${\sf df}(n,\mathbf{v},\gamma)\equiv\F$ by equation~\eqref{df definition}. 
\end{proof-of-claim}
The statement of the lemma follows from the two claims above.
\end{proof}

\begin{lemma}
Set {\sf DF} is $\Pi_2$-hard.    
\end{lemma}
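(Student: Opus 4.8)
The plan is to reduce from the canonical $\Pi_2$-complete problem for quantified Boolean formulae, namely deciding whether a closed formula of the form $\forall \mathbf{x}\,\exists \mathbf{y}\,\psi(\mathbf{x},\mathbf{y})$ is true, where $\psi$ is a quantifier-free Boolean formula. Given such an instance, I would construct a mechanism $(n,\mathbf{v},\gamma)$ in which a single designated agent's counterfactual power encodes the inner $\exists \mathbf{y}$, while a universally-quantified block of variables encodes $\forall \mathbf{x}$, and crucially arrange that \emph{at most one} agent can ever be responsible, so that the diffusion-free condition collapses to exactly the truth of the QBF.

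The concrete construction I have in mind uses very few agents. Let agent $1$ own an ordered set $\mathbf{v}_1$ of fresh Boolean variables encoding the $\forall \mathbf{x}$ block (so these are the variables that ${\sf df}$ will universally quantify over), let agent $2$ own $\mathbf{v}_2$ encoding the $\exists \mathbf{y}$ block, and let agent $3$ own a single ``switch'' variable. The deontic constraint $\gamma$ should be designed so that: (i) whenever the switch variable is set one way, $\gamma$ is satisfied outright (no violation, hence nobody responsible); (ii) whenever the switch is set the other way, $\gamma$ reduces essentially to $\neg\psi(\mathbf{v}_1,\mathbf{v}_2)$ combined with bookkeeping that guarantees only agent $2$ can ever have a winning counterfactual strategy — e.g., agents $1$ and $3$ are structurally unable to force $\gamma=\T$ because the later agents can always respond to spoil any choice of theirs, whereas agent $2$, acting before agent $3$ but after agent $1$, has a strategy to make $\gamma$ true precisely when $\exists \mathbf{y}\,\psi(\mathbf{x},\mathbf{y})$ holds for the already-fixed $\mathbf{x}$. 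One clean way to kill agents $1$ and $3$ is the ``salsa trick'': make $\gamma$ also depend on a matching condition among auxiliary copies of variables so that any lone earlier agent's choice can be defeated by a later agent. Under this design, there is never more than one responsible agent for trivial reasons (only agent $2$ is ever a candidate), so the mechanism is \emph{automatically} diffusion-free regardless of the QBF — which is the wrong thing. Hence I must instead arrange the encoding so that diffusion-freeness is \emph{equivalent} to the QBF being true: I want two agents to become simultaneously responsible exactly when the QBF is false.

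So the refined target is: build $\gamma$ so that there exist two distinct agents both responsible under some profile if and only if $\forall \mathbf{x}\,\exists \mathbf{y}\,\psi$ is \emph{false}, i.e.\ iff $\exists \mathbf{x}\,\forall \mathbf{y}\,\neg\psi$. The idea: introduce agents $i$ and $j$ (say the last two), each owning a duplicate control bit, such that a violation of $\gamma$ occurs only when a chosen $\mathbf{x}$-assignment (owned by an early agent, universally quantified by ${\sf df}$) satisfies $\forall \mathbf{y}\,\neg\psi$, and in exactly that case both agents $i$ and $j$ have a trivial escape action (flipping their control bit makes $\gamma$ true unconditionally), so both are responsible, witnessing diffusion; conversely, when no bad $\mathbf{x}$ exists, $\gamma$ is never violated, so there is no responsibility at all and the mechanism is in {\sf DF}. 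The $\mathbf{y}$-block must be owned by an agent acting \emph{after} the $\mathbf{x}$-agent but in a position where neither it nor agents $i,j$ can unilaterally rescue things except through the designated escape bits; careful placement in the order and a conjunctive ``$\mathbf{y}$-spoiler can always react'' gadget handles this. One then verifies, via Lemma~\ref{cf lemma} and the definition of ${\sf df}$, that $(n,\mathbf{v},\gamma)\in{\sf DF}$ iff the QBF holds, giving $\Pi_2$-hardness.

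The main obstacle is the simultaneous bookkeeping: I need agents $i$ and $j$ to \emph{both} be responsible (not just one) exactly on the ``bad $\mathbf{x}$'' profiles, while ensuring that on \emph{good} profiles $\gamma$ is satisfied so nobody is responsible, and simultaneously preventing the $\mathbf{x}$-owning agent (and any intermediate $\mathbf{y}$-owning agent) from being counterfactually responsible in a way that would spuriously make the mechanism diffusion-free or not. Getting a single $\gamma$ that threads all three requirements — violation iff bad $\mathbf{x}$; two escape agents on exactly those violations; no rogue responsible agent elsewhere — is the delicate part, and I expect to spend most of the proof checking the four cases of Definition~\ref{responsibility} for each relevant agent against the constructed $\gamma$, using the order of play to argue that later ``spoiler'' variables neutralise unwanted counterfactual strategies.
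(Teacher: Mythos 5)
There is a genuine gap, and it lies in the cornerstone of your refined plan: you require a quantifier-free $\gamma$, computable in polynomial time from $\psi$, such that a violation of $\gamma$ can occur \emph{only} when the chosen $\mathbf{x}$ is ``bad'' (satisfies $\forall\mathbf{y}\neg\psi$), so that ``when no bad $\mathbf{x}$ exists, $\gamma$ is never violated.'' Since your two designated escape agents have \emph{unconditional} escape bits, they are responsible on every violating profile; hence your reduction is correct only if violating profiles exist iff a bad $\mathbf{x}$ exists, i.e.\ only if $\gamma$ is a tautology exactly when $\forall\mathbf{x}\exists\mathbf{y}\psi$ is true. That would be a polynomial-time many-one reduction from a $\Pi_2$-complete problem to the (co-NP, $\Pi_1$) tautology problem, collapsing the polynomial hierarchy. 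The point is that ``$\mathbf{x}$ is bad'' is itself a $\Pi_1$ property of $\mathbf{x}$ and cannot be folded into the violation condition of a quantifier-free $\gamma$; if instead you let the violation test $\neg\psi(\mathbf{x},\mathbf{y})$ on the profile's own $\mathbf{y}$, violations (and hence diffusion between your two escape agents) occur whenever $\neg\psi$ is satisfiable, which only yields $\Pi_1$-hardness. You also never exhibit $\gamma$, explicitly deferring ``the delicate part,'' so the proposal is incomplete even on its own terms.

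The fix, which is the paper's route, is to move the second-order quantification out of the violation condition and into one agent's \emph{counterfactual capability}. Take $n=2$, $\mathbf{v}_1=\mathbf{x}\cup\{z_1\}$, $\mathbf{v}_2=\mathbf{y}\cup\{z_2\}$, and $\gamma=(\neg\phi\wedge z_1)\vee z_2$. Violating profiles always exist (any profile with $z_1=z_2=0$), and agent $2$ always has the unconditional escape $z_2=1$, so agent $2$ is responsible on every violation; diffusion therefore occurs iff agent $1$ \emph{also} has a winning strategy, and since agent $1$ moves first its capability is the closed formula $\exists\mathbf{x}\exists z_1\forall\mathbf{y}\forall z_2\,\gamma\equiv\exists\mathbf{x}\forall\mathbf{y}\neg\phi$. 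Thus the mechanism is in ${\sf DF}$ iff $\forall\mathbf{x}\exists\mathbf{y}\phi\equiv\T$, with no need to make the set of violating profiles depend on the truth of the QBF. In short: encode $\exists\mathbf{x}\forall\mathbf{y}\neg\phi$ in whether the $\mathbf{x}$-owning first agent can win (making it one of the two diffusion candidates), not in whether $\gamma$ can be violated.
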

\begin{proof} 
Consider any Boolean formula $\phi$ without quantifiers whose variables are divided into two disjoint sets, $\mathbf{x}$ and $\mathbf{y}$. Define mechanism $(n,\mathbf{v},\gamma)$ as follows:
\begin{enumerate}
    \item $n=2$,
    \item $\mathbf{v}_1=\mathbf{x}\cup\{z_1\}$ and  $\mathbf{v}_2=\mathbf{y}\cup\{z_2\}$, where $z_1$ and $z_2$ are any two distinct Boolean variables such that $z_1,z_2\notin \mathbf{x}\cup\mathbf{y}$,
    \item $
\gamma=(\neg \phi\wedge z_1)\vee z_2
$.
\end{enumerate}
Note that $\forall \mathbf{x} \exists \mathbf{y} \phi$ is a closed formula. Thus, it is semantically equivalent to either $\F$ or $\T$.
To prove the statement of the lemma, it suffices to establish the following claim.
\begin{claim}
$\forall \mathbf{x} \exists \mathbf{y} \phi\equiv\T$ iff $(n,\mathbf{v},\gamma)\in {\sf DF}$.  \end{claim}

\begin{proof-of-claim}
$(\Rightarrow):$  Suppose that $\forall \mathbf{x} \exists \mathbf{y} \phi\equiv\T$.
Hence, 
$$\forall \mathbf{x}\forall z_1 \exists \mathbf{y} (\phi\vee \neg z_1)\equiv\T.$$
Thus,
$\forall \mathbf{x}\forall z_1 \exists \mathbf{y}\exists z_2 (( \phi\vee \neg z_1)\wedge \neg z_2)\equiv\T$.
Then, by De Morgan's laws, 
$\forall \mathbf{x}\forall z_1 \exists \mathbf{y}\exists z_2 \neg((\neg \phi\wedge z_1)\vee z_2)\equiv\T$. Hence,
$\forall \mathbf{x}\forall z_1 \exists \mathbf{y}\exists z_2 \neg\gamma\equiv\T$ by the definition of $\gamma$.
Recall that the first agent chooses the values of variables in the set $\mathbf{v}_1=\mathbf{x}\cup\{z_1\}$ and the second agent chooses the values of variables in the set $\mathbf{v}_2=\mathbf{y}\cup\{z_2\}$. 
Thus, the first agent has no action that guarantees that the deontic constraint $\gamma$ is satisfied. Hence, by item~2 of Definition~\ref{responsibility}, the first agent is not responsible under any action profile of the mechanism $(n,\mathbf{v},\gamma)$. Therefore, $(n,\mathbf{v},\gamma)\in {\sf DF}$ by Definition~\ref{DF definition} and because the mechanism has only two agents.

\noindent
$(\Leftarrow):$ Suppose that
$\forall \mathbf{x} \exists \mathbf{y} \phi\equiv\F$. Hence, 
$\neg\forall \mathbf{x} \exists \mathbf{y} \phi\equiv\T$. Thus, by De Morgan's laws, 
$\exists \mathbf{x} \forall \mathbf{y} \neg \phi\equiv\T$.
Then, 
$\exists \mathbf{x}\exists z_1 \forall \mathbf{y} (\neg \phi\wedge z_1)\equiv\T$. Hence, 
$$\exists \mathbf{x}\exists z_1 \forall \mathbf{y}\forall z_2 ((\neg \phi\wedge z_1)\vee z_2)\equiv\T.$$
Then, $\exists \mathbf{x}\exists z_1 \forall \mathbf{y}\forall z_2 \gamma\equiv\T$ by the definition of formula $\gamma$. Thus, the first agent has an action that guarantees that the deontic constraint $\gamma$ is satisfied no matter what the action of the second agent is. Note that the second agent can guarantee the constraint $
\gamma=(\neg \phi\wedge z_1)\vee z_2
$ by simply choosing $z_2=1$. Hence, both agents have actions that guarantee that the constraint $\gamma$ is satisfied. Consider any action profile under which $z_1=z_2=0$. Note that the formula $\gamma$ has value 0 under this profile. Yet, as we have seen above, both agents have a strategy to prevent this. Therefore, $(n,\mathbf{v},\gamma)\notin {\sf DF}$ by Definition~\ref{DF definition}.
\end{proof-of-claim}
This concludes the proof of the lemma.
\end{proof}

The next theorem follows from the two lemmas above.
\begin{theorem}
The set ${\sf DF}$ is $\Pi_2$-complete.    
\end{theorem}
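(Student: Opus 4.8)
The plan is to obtain the theorem as an immediate corollary of the two lemmas already established in this section. Recall that a set is $\Pi_2$-complete precisely when it belongs to the class $\Pi_2$ and it is $\Pi_2$-hard (under polynomial-time many-one reductions). The first lemma of the section supplies the former and the second supplies the latter, so there is nothing to do beyond combining them; in particular the theorem's proof is one line ``follows from the two lemmas above''.

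For orientation, here is how the two ingredients fit together. Membership is witnessed by the closed formula ${\sf df}(n,\mathbf{v},\gamma)$ of equation~\eqref{df definition}: one checks it is a $\Pi_2$-formula (each ${\sf cf}_i$ is $\Sigma_2$ by its shape in~\eqref{cf}, hence each $\neg({\sf cf}_i\wedge{\sf cf}_j)$ is $\Pi_2$, and a conjunction of $\Pi_2$-formulae disjoined with the quantifier-free $\gamma$ remains $\Pi_2$ once the outer universal block is pulled to the front) and that $(n,\mathbf{v},\gamma)\in{\sf DF}$ iff ${\sf df}(n,\mathbf{v},\gamma)\equiv\T$, the equivalence being read off from Lemma~\ref{cf lemma} and Definition~\ref{DF definition}. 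Since the formula has polynomial size and evaluating a $\Pi_2$ quantified Boolean formula is the canonical $\Pi_2$ problem, this places ${\sf DF}$ in $\Pi_2$. Hardness is witnessed by the reduction $\phi\mapsto(2,\{\mathbf{x}\cup\{z_1\},\mathbf{y}\cup\{z_2\}\},(\neg\phi\wedge z_1)\vee z_2)$ from the $\Pi_2$-complete problem ``$\forall\mathbf{x}\exists\mathbf{y}\phi\equiv\T$'': the second agent can always satisfy $\gamma$ by setting $z_2=1$, so diffusion occurs iff the first agent also has a winning strategy, which happens exactly when $\forall\mathbf{x}\exists\mathbf{y}\phi\equiv\T$, with any profile setting $z_1=z_2=0$ (which falsifies $\gamma$) exhibiting the diffusion in the ``false'' direction.

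Since both lemmas are in hand, assembling them into the theorem is routine and presents no real obstacle. The only point that required genuine care — and that the theorem's proof inherits without comment — is the design of the auxiliary variables $z_1,z_2$ in the hardness reduction: they are what turn the second agent into a ``free rider'' that is responsible under every bad profile, so that the truth value of the target QBF is detected solely through whether the first agent is responsible, i.e. solely through whether diffusion arises.
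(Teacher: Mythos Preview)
Your proof is correct and matches the paper exactly: the theorem is stated as an immediate consequence of the membership lemma and the hardness lemma, and your summary of the membership argument is accurate.

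One slip in your orientation of the hardness reduction, though: you write that the first agent has a winning strategy ``exactly when $\forall\mathbf{x}\exists\mathbf{y}\phi\equiv\T$'', but the sign is inverted. With $\gamma=(\neg\phi\wedge z_1)\vee z_2$, agent~1 can force $\gamma$ iff $\exists\mathbf{x}\exists z_1\forall\mathbf{y}\forall z_2\,\gamma$, which (taking $z_2=0$, $z_1=1$) reduces to $\exists\mathbf{x}\forall\mathbf{y}\,\neg\phi$, i.e.\ to $\forall\mathbf{x}\exists\mathbf{y}\,\phi\equiv\F$. Thus diffusion occurs iff $\forall\mathbf{x}\exists\mathbf{y}\,\phi\equiv\F$, and the mechanism lies in ${\sf DF}$ iff $\forall\mathbf{x}\exists\mathbf{y}\,\phi\equiv\T$, which is what the reduction needs. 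Your closing phrase ``in the `false' direction'' is consistent with this corrected reading, so the confusion is only in the one clause; the underlying argument and the theorem itself are unaffected.
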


\subsection{Gap Problem}\label{Gap Problem section}
In this subsection, we define the class of gap-free mechanisms and show that it is $\Pi_3$-complete.

\begin{definition}\label{GF definition}
Set {\sf GF} (``gap-free'') contains all (sequential decision-making) mechanisms $(n,\mathbf{v},\gamma)$  such that under each action profile $\mathbf{s}_1,\dots,\mathbf{s}_n$ of the mechanism, if $\gamma[\mathbf{s}_1,\dots,\mathbf{s}_n/\mathbf{v}_1,\dots,\mathbf{v}_n]\equiv\F$,
then there is at least one responsible agent under the profile $\mathbf{s}_1,\dots,\mathbf{s}_n$.
\end{definition}

\begin{lemma}
Set {\sf GF} belongs to class $\Pi_3$.    
\end{lemma}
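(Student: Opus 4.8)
The plan is to exhibit an explicit closed quantified Boolean formula ${\sf gf}(n,\mathbf{v},\gamma)$ that is semantically equivalent to $\T$ exactly when $(n,\mathbf{v},\gamma)\in{\sf GF}$, and to argue that this formula is (semantically equivalent to) a $\Pi_3$-formula. Following the template of the ${\sf DF}$ membership proof, I would set
\begin{equation*}
{\sf gf}(n,\mathbf{v},\gamma)=\forall\mathbf{v}_1\dots\forall\mathbf{v}_n\left(\neg\gamma\to\bigvee_{1\le i\le n}{\sf cf}_i(\mathbf{v}_1,\dots,\mathbf{v}_{i-1})\right).
\end{equation*}
The intuition, justified by Lemma~\ref{cf lemma}, is that $\neg\gamma\wedge{\sf cf}_i$ holding under a profile means agent $i$ is responsible, so the disjunction over $i$ says ``some agent is responsible whenever $\gamma$ fails'', and the outer universal quantifier ranges over all action profiles.

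First I would verify the quantifier-complexity bookkeeping in a claim analogous to the one in the ${\sf DF}$ proof. From equation~\eqref{cf}, each ${\sf cf}_i$ is a $\Sigma_2$-formula, so $\bigvee_i{\sf cf}_i$ is $\Sigma_2$; hence $\neg\gamma\to\bigvee_i{\sf cf}_i$ is semantically equivalent to $\gamma\vee\bigvee_i{\sf cf}_i$, which is still $\Sigma_2$ (adjoining the quantifier-free disjunct $\gamma$ changes nothing). Prefixing the block $\forall\mathbf{v}_1\dots\forall\mathbf{v}_n$ then yields a formula of the form $\forall\exists\forall$, i.e. a $\Pi_3$-formula. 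One should be slightly careful to note the leading $\forall$-block merges with the leading $\forall$ of the $\Sigma_2$ matrix's dual reading — but since $\Sigma_2 = \exists\forall$, prefixing $\forall$ gives exactly $\forall\exists\forall = \Pi_3$, with no collapse available in general.

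Next I would prove the correctness claim $(n,\mathbf{v},\gamma)\in{\sf GF}$ iff ${\sf gf}(n,\mathbf{v},\gamma)\equiv\T$, by contraposition in both directions exactly as done for ${\sf DF}$. If ${\sf gf}\equiv\F$, then some profile $\mathbf{s}_1,\dots,\mathbf{s}_n$ makes the matrix false, so $\gamma[\mathbf{s}/\mathbf{v}]\equiv\F$ while ${\sf cf}_i[\mathbf{s}/\mathbf{v}]\equiv\F$ for every $i$; by Lemma~\ref{cf lemma} no agent is responsible under that profile, so the mechanism has a gap and is not in ${\sf GF}$ (Definition~\ref{GF definition}). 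Conversely, if $(n,\mathbf{v},\gamma)\notin{\sf GF}$, there is a profile with $\gamma$ false and no responsible agent; Lemma~\ref{cf lemma} gives ${\sf cf}_i[\mathbf{s}/\mathbf{v}]\equiv\F$ for all $i$, so the matrix is false under $\mathbf{s}$ and ${\sf gf}\equiv\F$. The lemma then follows from the two claims.

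I do not expect a serious obstacle here — this membership direction is essentially a transcription of the ${\sf DF}$ argument with ``at most one'' replaced by ``at least one'', so the disjunction $\bigvee_{i\neq j}({\sf cf}_i\wedge{\sf cf}_j)$ in the antecedent is replaced by a single disjunction $\bigvee_i{\sf cf}_i$ in the consequent, which costs one extra quantifier alternation and explains the jump from $\Pi_2$ to $\Pi_3$. The only place demanding care is the complexity accounting: one must make sure that folding the quantifier-free $\gamma$ into the $\Sigma_2$ body does not require an extra alternation (it does not, since quantifier-free formulae sit at the bottom of the hierarchy) and that the final prenex form is genuinely $\forall\exists\forall$ rather than something one could cleverly compress. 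The real difficulty of the paper — showing $\Pi_3$-hardness, i.e. that no compression is possible — is presumably deferred to the next lemma and is not needed for this statement.
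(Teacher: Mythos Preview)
Your proposal is correct and essentially identical to the paper's own proof: the paper defines exactly the same formula ${\sf gf}(n,\mathbf{v},\gamma)=\forall\mathbf{v}_1\dots\forall\mathbf{v}_n(\neg\gamma\to\bigvee_i{\sf cf}_i)$, observes it is $\Pi_3$ because each ${\sf cf}_i$ is $\Sigma_2$, and proves correctness by the same contrapositive appeals to Lemma~\ref{cf lemma} and Definition~\ref{GF definition}. Your complexity bookkeeping is slightly more explicit than the paper's (which simply states the conclusion), but the argument is the same.
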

\begin{proof}
For any mechanism $(n,\mathbf{v},\gamma)$,
consider the following formula
\begin{equation}\label{gf formula}
{\sf gf}(n,\mathbf{v},\gamma)=\forall\mathbf{v}_1\dots
\forall\mathbf{v}_n\left(\neg\gamma\to\bigvee_{i}
{\sf cf}_i(\mathbf{v}_1,\dots, \mathbf{v}_{i-1})
\right).
\end{equation}
\begin{claim}
${\sf gf}(n,\mathbf{v},\gamma)$ is a $\Pi_3$-formula for each mechanism $(n,\mathbf{v},\gamma)$.   
\end{claim}
\begin{proof-of-claim}
Formula ${\sf cf}_i$ belongs to class $\Sigma_2$ by equation~\eqref{cf}. Therefore,  formula ${\sf gf}(n,\mathbf{v},\gamma)$ belongs to class $\Pi_3$ by equation~\eqref{gf formula}.  
\end{proof-of-claim}

Note that the formula ${\sf gf}(n,\mathbf{v},\gamma)$ has no free variables. Thus, it is semantically equivalent to either $\F$ or $\T$.

\begin{claim}
$(n,\mathbf{v},\gamma)\in {\sf GF}$ iff ${\sf gf}(n,\mathbf{v},\gamma)\equiv\T$.
\end{claim}
\begin{proof-of-claim}
$(\Rightarrow):$ Suppose that the ${\sf gf}(n,\mathbf{v},\gamma)\equiv\F$. Thus, by equation~\eqref{gf formula}, there is an action profile 
$\mathbf{s}_1,\dots,\mathbf{s}_n$
such that
\begin{equation}\label{23-jul-a}
\gamma [\mathbf{s}_1,\dots,\mathbf{s}_n/\mathbf{v}_1,\dots,\mathbf{v}_n]\equiv\F
\end{equation}
and, for each $i\le n$,
\begin{equation*}
{\sf cf}_i[\mathbf{s}_1,\dots,\mathbf{s}_n/\mathbf{v}_1,\dots,\mathbf{v}_n]\equiv\F.
\end{equation*}
Thus, for each $i\le n$,
\begin{equation*}
(\neg \gamma \wedge {\sf cf}_i)[\mathbf{s}_1,\dots,\mathbf{s}_n/\mathbf{v}_1,\dots,\mathbf{v}_n]\equiv\F.
\end{equation*}
Hence, by Lemma~\ref{cf lemma}, none of the agents is responsible under the action profile 
$\mathbf{s}_1,\dots,\mathbf{s}_n$. Therefore, $(n,\mathbf{v},\gamma)\in {\sf GF}$ by Definition~\ref{GF definition} and equation~\eqref{23-jul-a}.

\noindent
$(\Leftarrow):$ Suppose that $(n,\mathbf{v},\gamma)\notin {\sf GF}$. Thus, by Definition~\ref{GF definition}, there is an action profile $\mathbf{s}_1,\dots,\mathbf{s}_n$ such that
\begin{equation}\label{23-jul-b}
\gamma [\mathbf{s}_1,\dots,\mathbf{s}_n/\mathbf{v}_1,\dots,\mathbf{v}_n]\equiv\F
\end{equation}
and no agent is responsible under the profile $\mathbf{s}_1,\dots,\mathbf{s}_n$. 
The latter, by Lemma~\ref{cf lemma}, implies that
$(\neg \gamma \wedge {\sf cf}_i)[\mathbf{s}_1,\dots,\mathbf{s}_n/\mathbf{v}_1,\dots,\mathbf{v}_n]\equiv\F$ for each agent $i\le n$. Thus, ${\sf cf}_i[\mathbf{s}_1,\dots,\mathbf{s}_n/\mathbf{v}_1,\dots,\mathbf{v}_n]\equiv\F$ for each agent $i\le n$ by equation~\eqref{23-jul-b}. Therefore, ${\sf gf}(n,\mathbf{v},\gamma)\equiv\F$ by equations~\eqref{gf formula} and \eqref{23-jul-b}.
\end{proof-of-claim}
This concludes the proof of the lemma.
\end{proof}

\begin{lemma}
Set {\sf GF} is $\Pi_3$-hard.  
\end{lemma}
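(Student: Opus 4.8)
The plan is to reduce from the canonical $\Pi_3$-complete problem: deciding whether a closed quantified Boolean formula of the form $\forall\mathbf{x}\exists\mathbf{y}\forall\mathbf{z}\,\phi$, with $\phi$ quantifier-free, is equivalent to $\T$. This mirrors the $\Pi_2$-hardness argument for ${\sf DF}$, but with one more agent and a single auxiliary ``control'' variable. Given such a formula with variables partitioned into disjoint sets $\mathbf{x}$, $\mathbf{y}$, $\mathbf{z}$, I would define the mechanism $(n,\mathbf{v},\gamma)$ by $n=3$, $\mathbf{v}_1=\mathbf{x}$, $\mathbf{v}_2=\mathbf{y}\cup\{p\}$ for a fresh Boolean variable $p\notin\mathbf{x}\cup\mathbf{y}\cup\mathbf{z}$, $\mathbf{v}_3=\mathbf{z}$, and $\gamma=p\wedge\phi$. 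This construction is clearly polynomial-time computable, so it suffices to prove the reduction correct.

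The next step is to compute the three counterfactual formulae of equation~\eqref{cf} by eliminating the quantifier over $p$ through instantiation: ${\sf cf}_1=\exists\mathbf{x}\forall\mathbf{y}\forall p\forall\mathbf{z}(p\wedge\phi)\equiv\F$ since the instance $p:=\F$ already falsifies the matrix; ${\sf cf}_2(\mathbf{v}_1)=\exists\mathbf{y}\exists p\forall\mathbf{z}(p\wedge\phi)\equiv\exists\mathbf{y}\forall\mathbf{z}\phi$ since only the $p:=\T$ disjunct survives; and ${\sf cf}_3(\mathbf{v}_1,\mathbf{v}_2)=\exists\mathbf{z}(p\wedge\phi)\equiv p\wedge\exists\mathbf{z}\phi$. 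I would then establish $(3,\mathbf{v},\gamma)\in{\sf GF}$ iff $\forall\mathbf{x}\exists\mathbf{y}\forall\mathbf{z}\phi\equiv\T$ using Lemma~\ref{cf lemma}. For the ``if'' direction: if the quantified formula is $\T$ then ${\sf cf}_2[\mathbf{s}_1/\mathbf{v}_1]\equiv\T$ for every value tuple $\mathbf{s}_1$ of $\mathbf{v}_1=\mathbf{x}$, so at any profile $\mathbf{s}_1,\mathbf{s}_2,\mathbf{s}_3$ with $\gamma[\mathbf{s}_1,\mathbf{s}_2,\mathbf{s}_3/\mathbf{v}_1,\mathbf{v}_2,\mathbf{v}_3]\equiv\F$ we get $(\neg\gamma\wedge{\sf cf}_2)[\mathbf{s}_1,\mathbf{s}_2,\mathbf{s}_3/\mathbf{v}_1,\mathbf{v}_2,\mathbf{v}_3]\equiv\T$, so agent $2$ is responsible and the mechanism is gap-free. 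For the ``only if'' direction: if the quantified formula is $\F$, fix a tuple $\mathbf{a}$ for $\mathbf{x}$ with ${\sf cf}_2[\mathbf{a}/\mathbf{v}_1]\equiv\F$, and consider any profile $\mathbf{s}_1=\mathbf{a},\mathbf{s}_2,\mathbf{s}_3$ in which the $p$-component of $\mathbf{s}_2$ is $0$; then $\gamma\equiv\F$ under this profile while ${\sf cf}_1\equiv\F$, ${\sf cf}_2[\mathbf{a}/\mathbf{v}_1]\equiv\F$, and ${\sf cf}_3\equiv\F$ (its leading conjunct $p$ is substituted by $\F$), so by Lemma~\ref{cf lemma} no agent is responsible, witnessing $(3,\mathbf{v},\gamma)\notin{\sf GF}$.

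The role of the control variable $p$ — and the step I expect to be the crux — is to ensure that the ``outer'' agent $1$ and the ``inner'' agent $3$ can never be responsible at the profiles that matter, so that the entire burden of gap-freeness is carried by agent $2$, whose counterfactual power is exactly $\exists\mathbf{y}\forall\mathbf{z}\phi$. The conjunction with $p$ does double duty: the universal quantifier over $p$ occurring in ${\sf cf}_1$ forces ${\sf cf}_1\equiv\F$ outright, while setting $p=0$ simultaneously falsifies $\gamma$ (placing us at a constraint-violating profile) and falsifies ${\sf cf}_3$ (so agent $3$ is not responsible there). A more naive encoding, such as taking $\gamma=\phi$ directly, fails precisely because agent $3$ can be counterfactually able while $\phi$ is only sometimes false, which would spoil the equivalence; routing the gap witness into the $p=0$ region is what makes the argument go through. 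Combined with the $\Pi_3$-completeness of $\forall\exists\forall$-QBF truth, this yields that ${\sf GF}$ is $\Pi_3$-hard.
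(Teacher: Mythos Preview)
Your proposal is correct and essentially identical to the paper's own proof: the paper uses the same three-agent mechanism with $\mathbf{v}_1=\mathbf{x}$, $\mathbf{v}_2=\mathbf{y}\cup\{t\}$, $\mathbf{v}_3=\mathbf{z}$ and $\gamma=\phi\wedge t$ (your $p$ is the paper's $t$), and argues the equivalence in the same two directions. The only cosmetic difference is that you phrase the argument via explicit computation of ${\sf cf}_1,{\sf cf}_2,{\sf cf}_3$ and Lemma~\ref{cf lemma}, whereas the paper appeals directly to Definition~\ref{responsibility}; the substance is the same.
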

\begin{proof}
Consider any Boolean formula $\phi$ without quantifiers whose variables are divided into three disjoint sets, $\mathbf{x}$, $\mathbf{y}$, and $\mathbf{z}$. Define mechanism $(n,\mathbf{v},\gamma)$ as follows:
\begin{enumerate}
    \item $n=3$,
    \item $\mathbf{v}_1=\mathbf{x}$, $\mathbf{v}_2=\mathbf{y}\cup\{t\}$, 
    $\mathbf{v}_3=\mathbf{z}$,
    where $t$ is any Boolean variable such that $t\notin \mathbf{x}\cup\mathbf{y}\cup\mathbf{z}$,
    \item $\gamma=\phi\wedge t$.
\end{enumerate}
Note that $\forall \mathbf{x} \exists \mathbf{y}\forall\mathbf{z} \phi$ is a closed formula. Thus, it is semantically equivalent to either $\F$ or $\T$.
To prove the statement of the lemma, it suffices to establish the following claim.
\begin{claim}
$\forall \mathbf{x} \exists \mathbf{y}\forall\mathbf{z}\, \phi\equiv\T$ iff $(n,\mathbf{v},\gamma)\in {\sf GF}$.  
\end{claim}
\begin{proof-of-claim}
$(\Rightarrow):$  Let $\forall \mathbf{x} \exists \mathbf{y}\forall\mathbf{z}\, \phi\equiv\T$. Thus, $\forall \mathbf{x} \exists \mathbf{y}\exists t \forall\mathbf{z}\, (\phi\wedge t)\equiv\T$. Recall that  $\mathbf{v}_1=\mathbf{x}$ and $\mathbf{v}_2=\mathbf{y}\cup\{t\}$. Thus, no matter what action the first agent takes, the second agent can choose the values of variables in set $\mathbf{v}_2$ in such a way that the deontic constraint $\gamma=\phi\wedge t$ is satisfied under the resulting action profile. Hence, by Definition~\ref{responsibility}, the second agent is responsible under each action profile $\mathbf{s}_1,\dots,\mathbf{s}_n$ such that $\gamma[\mathbf{s}_1,\dots,\mathbf{s}_n/\mathbf{v}_1,\dots,\mathbf{v}_n]\equiv\F$. Therefore, $(n,\mathbf{v},\gamma)\in {\sf GF}$ by Definition~\ref{GF definition}. 

\noindent
$(\Leftarrow):$  Suppose 
$\forall \mathbf{x} \exists \mathbf{y}\forall\mathbf{z}\, \phi\equiv\F$. Thus,
$\exists \mathbf{x} \forall \mathbf{y}\exists\mathbf{z}\, \neg \phi\equiv\T$. Then, there is a tuple $\mathbf{x}_0\in \{0,1\}^{|\mathbf{x}|}$ such that
\begin{equation}\label{24-jul-a}
(\forall \mathbf{y}\exists\mathbf{z}\neg \phi)[\mathbf{x}_0/\mathbf{x}]\equiv \T.    
\end{equation}
Let us consider {\em any} action profile $\mathbf{s}_1,\mathbf{s}_2,\mathbf{s}_3$ under which the first agent chooses action $\mathbf{x}_0$ and the second agent assigns value $0$ to variable $t$. Note that
$\gamma[\mathbf{s}_1,\mathbf{s}_2,\mathbf{s}_3/\mathbf{v}_1,\mathbf{v}_2,\mathbf{v}_3]\equiv\F$
because $t$ has value $0$. Thus, by Definition~\ref{GF definition}, to prove that $(n,\mathbf{v},\gamma)\notin {\sf GF}$, it suffices to show that neither of the agents is responsible under the action profile $\mathbf{s}_1,\mathbf{s}_2,\mathbf{s}_3$.

The first and the third agents are not responsible because neither of them has a strategy to guarantee that the deontic constraint $\gamma=\phi\wedge t$ is satisfied. This is because neither of these agents has control of variable $t$.

The second agent has no strategy to guarantee that the constraint $\gamma=\phi\wedge t$ is satisfied due to equation~\eqref{24-jul-a} and because the first agent has chosen action $\mathbf{x}_0$.
\end{proof-of-claim}
This concludes the proof of the lemma.
\end{proof}

The next theorem follows from the two lemmas above.
\begin{theorem}
The set ${\sf GF}$ is $\Pi_3$-complete.    
\end{theorem}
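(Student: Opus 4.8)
The plan is to obtain the theorem directly from the two lemmas just established. By definition, a set is $\Pi_3$-complete exactly when it lies in the class $\Pi_3$ and it is $\Pi_3$-hard. Membership is supplied by the lemma stating that ${\sf GF}\in\Pi_3$, and hardness by the lemma stating that ${\sf GF}$ is $\Pi_3$-hard. Combining the two gives the claim with no further argument, so the body of the proof is a single sentence citing both results. I would write exactly that, since the work has already been done upstream.

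For the record, the two ingredients fit together as follows. The membership direction rests on the closed formula ${\sf gf}(n,\mathbf{v},\gamma)$ of equation~\eqref{gf formula}: because each ${\sf cf}_i$ is a $\Sigma_2$ formula by~\eqref{cf}, the prefix $\forall\mathbf{v}_1\dots\forall\mathbf{v}_n(\neg\gamma\to\bigvee_i{\sf cf}_i)$ collapses into a $\Pi_3$ prefix, and Lemma~\ref{cf lemma} shows $(n,\mathbf{v},\gamma)\in{\sf GF}$ iff that formula is equivalent to $\T$; this reduces ${\sf GF}$ to $\Pi_3$-QBF validity. The hardness direction goes the other way: from an arbitrary quantifier-free $\phi$ over disjoint blocks $\mathbf{x},\mathbf{y},\mathbf{z}$ one builds the three-agent mechanism with $\mathbf{v}_1=\mathbf{x}$, $\mathbf{v}_2=\mathbf{y}\cup\{t\}$, $\mathbf{v}_3=\mathbf{z}$ and $\gamma=\phi\wedge t$, designed so that the middle agent's counterfactual power mirrors the $\exists\mathbf{y}$ quantifier while the fresh gate $t$ simultaneously forces a constraint violation (on $t=0$) and prevents agents~$1$ and~$3$ from ever being responsible. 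This makes membership in ${\sf GF}$ equivalent to the truth of $\forall\mathbf{x}\exists\mathbf{y}\forall\mathbf{z}\,\phi$, the canonical $\Pi_3$-complete problem.

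Since both lemmas are already in hand, there is essentially no obstacle at the level of the theorem itself. Were one proving it from scratch, the delicate part would be the hardness reduction: arranging the auxiliary variable $t$ and the agent blocks so that the second agent is responsible \emph{precisely} when $\forall\mathbf{x}\exists\mathbf{y}\forall\mathbf{z}\,\phi$ holds, while ensuring that no responsibility can leak to the first or third agent, so that gap-freeness becomes genuinely equivalent to validity rather than merely implied by it.
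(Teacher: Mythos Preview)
Your proposal is correct and matches the paper's approach exactly: the theorem is obtained immediately from the two preceding lemmas (membership in $\Pi_3$ via the closed formula ${\sf gf}(n,\mathbf{v},\gamma)$, and $\Pi_3$-hardness via the three-agent reduction with $\gamma=\phi\wedge t$). Your summaries of both lemmas, including the role of the auxiliary variable $t$ in blocking responsibility for agents~1 and~3, coincide with the paper's constructions.
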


\subsection{Responsibility-Free Mechanisms}\label{Responsibility-Free Mechanisms section}

In the previous two sections, we have shown that sets ${\sf DF}$ and ${\sf GF}$ are $\Pi_2$-complete and $\Pi_3$-complete, respectively. This implies that the intersection of these two sets is a $\Pi_3$-set. In the rest of this article, we will observe that the set ${\sf GF}\cap {\sf DF}$ is, perhaps surprisingly, a $\Pi_2$-complete set. To prove that this set is a $\Pi_2$-set, we need to express ``gap-and-diffusion-free'' as a $\Pi_2$-formula. To do this, it is convenient first to write an auxiliary  ``responsibility-free'' formula that expresses a completely ``irresponsible'' mechanism design: no agent is responsible under any action profile. We do this in the current section.

\begin{definition}\label{RF}
Set {\sf RF} (``responsibility-free'') contains all mechanisms $(n,\mathbf{v},\gamma)$  for which there are no agents responsible under any action profile.
\end{definition}

When discussing responsibility-free mechanisms, it is convenient 
for any mechanism $M=(n,\mathbf{v},\gamma)$  
to consider a ``partial'' mechanism in which several agents have already made their choices. For example, if we consider the Salsa Competition mechanism with the order A, B, C, then a partial mechanism could represent a situation when Ann has already picked a dress and, thus, there are only two agents left in the mechanism. To be precise, there are {\em three} different partial mechanisms in which Ann picked red, white, and blue dresses. In general, by
$M_k(\mathbf{s}_1,\dots,\mathbf{s}_k)$
we denote the partial mechanism in which the first $k$ agents already made their choices and these choices are $\mathbf{s}_1,\dots,\mathbf{s}_k$.

\begin{definition}\label{partial mechanism}
For any action profile $\mathbf{s}_1,\dots,\mathbf{s}_k$ of a mechanism $M=(n,\mathbf{v},\gamma)$ and any integer $k\le n$, let mechanism $M_k(\mathbf{s}_1,\dots,\mathbf{s}_k)$ be the tuple 
$$(n-k,\{\mathbf{v}_{k+i}\}_{1\le i\le n-k},\gamma[\mathbf{s}_1,\dots,\mathbf{s}_k/\mathbf{v}_1,\dots,\mathbf{v}_k]).$$
\end{definition}
If $k=0$, then $M_k$ is just the original mechanism $M$. At the same time, if $k=n$, then $M_k$ is the ``trivial mechanism'' that has no agents. Thus, this mechanism vacuously belongs to classes  {\sf DF} and {\sf RF}. In addition, the only action profile of this mechanism is the empty sequence and deontic constraint $\gamma[\mathbf{s}_1,\dots,\mathbf{s}_k/\mathbf{v}_1,\dots,\mathbf{v}_k]$ is a closed formula which is semantically equivalent either to $\T$ or to $\F$. In the former case, the deontic constraint is satisfied under the unique action profile of the mechanism. Thus, the mechanism belongs to class {\sf GF}. In the latter case, it does not.

Let us again suppose that $M$ is the  Salsa Competition mechanism with agents Ann, Bob, and Charles. Formally, they are referred to as the first, the second, and the third agents of this mechanism. The mechanism $M_1(\mathbf{s}_1)$, for any tuple $\mathbf{s}_1$, has only two agents, Bob and Charles. In this mechanism, Bob is referred to as the first agent. Similarly, Charles is the {\em third} agent in mechanism $M_0$, the {\em second} in mechanism $M_1$, and the {\em first} in mechanism $M_2$. This observation clarifies why the first and the second statements in the lemma below refer to agent $i$ and agent $i+1$, respectively. 

\begin{lemma}\label{AAA}
For any action profile $\mathbf{s}_{1},\dots,\mathbf{s}_{n}$ of mechanism $M=(n,\mathbf{v},\gamma)$, any $k\le n$ and any $i< n-k$, the following two statements are equivalent:
\begin{enumerate}
    \item agent $i$ is responsible under the profile 
$\mathbf{s}_{k+2},\dots,\mathbf{s}_{n}$ of the mechanism $M_{k+1}(\mathbf{s}_1,\dots,\mathbf{s}_k,\mathbf{s}_{k+1})$,
    \item agent $i+1$ is responsible under the profile 
$\mathbf{s}_{k+1},\mathbf{s}_{k+2},\dots,\mathbf{s}_{n}$ of the mechanism $M_{k}(\mathbf{s}_1,\dots,\mathbf{s}_k)$.
\end{enumerate}
\end{lemma}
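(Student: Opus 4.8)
The statement is essentially a bookkeeping fact about how the notion of ``responsibility'' (Definition~\ref{responsibility}) behaves when we collapse a partial mechanism by one more fixed choice. The plan is to unfold Definition~\ref{responsibility} on both sides and check that the two unfolded conditions are literally the same after renaming indices. Concretely, write $M_k = M_k(\mathbf{s}_1,\dots,\mathbf{s}_k)$, whose deontic constraint is $\delta := \gamma[\mathbf{s}_1,\dots,\mathbf{s}_k/\mathbf{v}_1,\dots,\mathbf{v}_k]$ with free variables in $\mathbf{v}_{k+1},\dots,\mathbf{v}_n$ and whose agents $1,\dots,n-k$ correspond to the original agents $k+1,\dots,n$. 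Then $M_{k+1}(\mathbf{s}_1,\dots,\mathbf{s}_{k+1})$ has deontic constraint $\delta[\mathbf{s}_{k+1}/\mathbf{v}_{k+1}]$ and its agent $i$ corresponds to original agent $k+1+i$, i.e.\ to agent $i+1$ of $M_k$.

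The first step is to spell out what ``agent $i+1$ is responsible under $\mathbf{s}_{k+1},\dots,\mathbf{s}_n$ of $M_k$'' means via Definition~\ref{responsibility}: (a) $\delta[\mathbf{s}_{k+1},\dots,\mathbf{s}_n/\mathbf{v}_{k+1},\dots,\mathbf{v}_n]\equiv\F$, and (b) there is a tuple $\mathbf{t}$ for ordered set $\mathbf{v}_{k+1+i}$ such that for every completion agreeing with $\mathbf{s}_{k+1},\dots,\mathbf{s}_{k+i}$ on the earlier agents and with $\mathbf{t}$ on agent $i+1$, the constraint $\delta$ evaluates to $\T$. The second step is to do the same for ``agent $i$ is responsible under $\mathbf{s}_{k+2},\dots,\mathbf{s}_n$ of $M_{k+1}(\mathbf{s}_1,\dots,\mathbf{s}_{k+1})$'': (a$'$) $\delta[\mathbf{s}_{k+1}/\mathbf{v}_{k+1}][\mathbf{s}_{k+2},\dots,\mathbf{s}_n/\mathbf{v}_{k+2},\dots,\mathbf{v}_n]\equiv\F$, and (b$'$) there is $\mathbf{t}$ for $\mathbf{v}_{k+1+i}$ making $\delta[\mathbf{s}_{k+1}/\mathbf{v}_{k+1}]$ evaluate to $\T$ under every completion agreeing with $\mathbf{s}_{k+2},\dots,\mathbf{s}_{k+i}$ on agents $2,\dots,i-1$ and $\mathbf{t}$ on agent $i$. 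The third step is the trivial observation that the iterated substitution $\delta[\mathbf{s}_{k+1}/\mathbf{v}_{k+1}][\mathbf{s}_{k+2},\dots/\ldots]$ is by definition the same as $\delta[\mathbf{s}_{k+1},\mathbf{s}_{k+2},\dots/\ldots]$ (this is exactly the notational convention introduced right after the substitution definition, since the ordered sets $\mathbf{v}_j$ are pairwise disjoint), so (a) $\Leftrightarrow$ (a$'$); and that in (b) vs.\ (b$'$) the fixed choice $\mathbf{s}_{k+1}$ on agent $k+1$ is the only difference, which in (b$'$) has simply been pre-substituted into the constraint rather than fixed in the quantifier prefix — again the same convention gives (b) $\Leftrightarrow$ (b$'$). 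The free/bound variables left over in both cases are exactly those of agents $k+i+1,\dots,n$, so the universal condition over completions is the same set of action profiles on both sides.

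I do not anticipate a genuine obstacle here; the only thing requiring care is index alignment — keeping straight that agent $i$ of the smaller partial mechanism is agent $i+1$ of the larger one, and that the constraint $i < n-k$ guarantees agent $i+1$ actually exists in $M_k$ and agent $i$ exists in $M_{k+1}$. The cleanest write-up is probably to avoid invoking Lemma~\ref{cf lemma} (which would drag in the $\mathsf{cf}_i$ notation and its own index shift) and instead argue directly from Definition~\ref{responsibility} and the substitution conventions, establishing the equivalence of items~1 and~2 clause by clause. If a slicker route is preferred, one can alternatively note that both statements are equivalent to $(\neg\delta\wedge\mathsf{cf}^{M_k}_{i+1})[\mathbf{s}_{k+1},\dots,\mathbf{s}_n/\ldots]\equiv\T$ using Lemma~\ref{cf lemma}, but then one must separately check that $\mathsf{cf}^{M_k}_{i+1}[\mathbf{s}_{k+1}/\mathbf{v}_{k+1}] = \mathsf{cf}^{M_{k+1}(\ldots,\mathbf{s}_{k+1})}_{i}$, which is the same substitution bookkeeping and buys nothing.
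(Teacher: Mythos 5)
Your proposal is correct and matches the paper's argument in substance: both sides reduce to the same substitution identity, with the disjointness of the ordered sets $\mathbf{v}_1,\dots,\mathbf{v}_n$ used to move the fixed choice $\mathbf{s}_{k+1}$ from the profile position into the deontic constraint, together with the index shift between agent $i$ of $M_{k+1}$ and agent $i+1$ of $M_k$. The only cosmetic difference is that the paper packages the unfolding through Lemma~\ref{cf lemma} and the formulae ${\sf cf}_i$, whereas you unfold Definition~\ref{responsibility} directly --- the bookkeeping is identical, as you yourself note.
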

\begin{proof}
The deontic constraint of the mechanism $M_{k+1}(\mathbf{s}_1,\dots,\mathbf{s}_{k},\mathbf{s}_{k+1})$ is the formula 
$$\gamma[\mathbf{s}_1,\dots,\mathbf{s}_{k},\mathbf{s}_{k+1}/\mathbf{v}_1,\dots,\mathbf{v}_{k},\mathbf{v}_{k+1}]$$
by Definition~\ref{partial mechanism}. In addition, by equation~\eqref{cf}, the statement ${\sf cf}_i$ for the mechanism 
$M_{k+1}(\mathbf{s}_1,\dots,\mathbf{s}_{k},\mathbf{s}_{k+1})$ is $$\exists\mathbf{v}_{(k+1)+i}\forall\mathbf{v}_{(k+1)+i+1}\dots\forall\mathbf{v}_n(\gamma[\mathbf{s}_1,\dots,\mathbf{s}_{k},\mathbf{s}_{k+1}/\mathbf{v}_1,\dots,\mathbf{v}_{k},\mathbf{v}_{k+1}]).$$
Then, by Lemma~\ref{cf lemma} and equation~\eqref{cf}, the first statement of the lemma is equivalent to
\begin{multline*}
(\neg \gamma[\mathbf{s}_1,\dots,\mathbf{s}_{k},\mathbf{s}_{k+1}/\mathbf{v}_1,\dots,\mathbf{v}_{k},\mathbf{v}_{k+1}]\wedge\\ \exists\mathbf{v}_{(k+1)+i}\forall\mathbf{v}_{(k+1)+i+1}\dots\forall\mathbf{v}_n(\gamma[\mathbf{s}_1,\dots,\mathbf{s}_{k},\mathbf{s}_{k+1}/\mathbf{v}_1,\dots,\mathbf{v}_{k},\mathbf{v}_{k+1}]))\\\
[\mathbf{s}_{k+2},\dots,\mathbf{s}_{n}/\mathbf{v}_{k+2},\dots,\mathbf{v}_{n}]\equiv\T.
\end{multline*}
Note that sets $\mathbf{v}_1,\dots,\mathbf{v}_n$ are disjoint by Definition~\ref{mechanism definition}. Thus, the above statement is equivalent to
\begin{multline}\label{10-aug-a}
(\neg \gamma[\mathbf{s}_1,\dots,\mathbf{s}_{k}/\mathbf{v}_1,\dots,\mathbf{v}_{k}]\wedge\\ \exists\mathbf{v}_{k+(i+1)}\forall\mathbf{v}_{k+(i+2)}\dots\forall\mathbf{v}_n(\gamma[\mathbf{s}_1,\dots,\mathbf{s}_{k}/\mathbf{v}_1,\dots,\mathbf{v}_{k}]))\\\
[\mathbf{s}_{k+1},\mathbf{s}_{k+2},\dots,\mathbf{s}_{n}/\mathbf{v}_{k+1},\mathbf{v}_{k+2},\dots,\mathbf{v}_{n}]\equiv\T.
\end{multline}
Observe by Definition~\ref{partial mechanism} that $\gamma[\mathbf{s}_1,\dots,\mathbf{s}_{k}/\mathbf{v}_1,\dots,\mathbf{v}_{k}]$ is the deontic constraint of the mechanism $M_{k}(\mathbf{s}_1,\dots,\mathbf{s}_k)$ and that the  formula
$$\exists\mathbf{v}_{k+(i+1)}\forall\mathbf{v}_{k+(i+2)}\dots\forall\mathbf{v}_n(\gamma[\mathbf{s}_1,\dots,\mathbf{s}_{k}/\mathbf{v}_1,\dots,\mathbf{v}_{k}])$$ is the statement ${\sf cf}_{i+1}$ for the mechanism $M_{k}(\mathbf{s}_1,\dots,\mathbf{s}_k)$ by equation~\eqref{cf}.
Hence, by Lemma~\ref{cf lemma}, statement~\eqref{10-aug-a} is equivalent to the second statement in the lemma.
\end{proof}

So far, we have been using the substitution notation $\phi[\mathbf{s}/\mathbf{v}]$ where $\mathbf{s}$ is a tuple of Boolean {\em values}. For example, if $\mathbf{v}=v_1,v_2$ and $\mathbf{s}=0,1$, then $$(v_1\to v_2)[\mathbf{s}/\mathbf{v}]=\F\to \T.$$ For the next definition, we need a {\em variable renaming} notation $\phi\langle\mathbf{v'}/\mathbf{v}\rangle$, where $\mathbf{v}$ and $\mathbf{v'}$ are two ordered sets of Boolean variables of the same size. The formula $\phi\langle\mathbf{v'}/\mathbf{v}\rangle$ is the result of renaming all variables from ordered set $\mathbf{v}$ into their counterpart in ordered set $\mathbf{v'}$. For instance, if $\mathbf{v}=v_1,v_2$ and $\mathbf{v}'=v_2,v_3$, then $(v_1\to v_2)\langle\mathbf{v'}/\mathbf{v}\rangle=v_2\to v_3$. If $\mathbf{v}_1,\dots,\mathbf{v}_k$ are disjoint ordered sets of Boolean variables, then by $\phi\langle\mathbf{v}'_1,\dots,\mathbf{v}'_k/\mathbf{v}_1,\dots,\mathbf{v}_k\rangle$ we mean the formula 
$
\phi\langle\mathbf{v}'_1/\mathbf{v}_1\rangle\dots\langle\mathbf{v}'_k/\mathbf{v'}_k\rangle
$.

We use the variable renaming notation to define the formula ${\sf rf}_k$ for each agent $k\le n$:
\begin{equation}\label{rf}
{\sf rf}_k=
\forall \mathbf{v}_{k+1}\dots \forall\mathbf{v}_{n}\forall\mathbf{v}'_{k+1}\dots\forall\mathbf{v}'_n
(\gamma\to \gamma\langle\mathbf{v}'_{k+1},\dots,\mathbf{v}'_{n}/\mathbf{v}_{k+1},\dots,\mathbf{v}_n\rangle). 
\end{equation}
Note that formula ${\sf rf}_k$ is not closed. It has free variables from the ordered sets $\mathbf{v}_1,\dots,\mathbf{v}_k$. 
Intuitively, the formula ${\sf rf}_k$ expresses the {\em uniformity condition}: if the values of the variables from the ordered sets $\mathbf{v}_{1},\dots,\mathbf{v}_{k}$ are fixed, then the value of formula $\gamma$ is the same no matter what are the values of variables from the ordered sets $\mathbf{v}_{k+1},\dots,\mathbf{v}_{n}$.
The next lemma explains the choice of the name for the formula ${\sf rf}_k$.

\begin{lemma}\label{RF lemma}
$M_k(\mathbf{s}_1,\dots,\mathbf{s}_k)\in  {\sf RF}$ iff ${\sf rf}_k[\mathbf{s}_1,\dots,\mathbf{s}_k/\mathbf{v}_1,\dots,\mathbf{v}_k]\equiv\T$.    
\end{lemma}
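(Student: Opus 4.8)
The plan is to reduce both sides of the claimed equivalence to a single structural property of the deontic constraint of the partial mechanism. Write $\delta:=\gamma[\mathbf{s}_1,\dots,\mathbf{s}_k/\mathbf{v}_1,\dots,\mathbf{v}_k]$; by Definition~\ref{partial mechanism} this is the deontic constraint of $M_k(\mathbf{s}_1,\dots,\mathbf{s}_k)$, and its free variables are exactly those in $\mathbf{v}_{k+1},\dots,\mathbf{v}_n$. Call $\delta$ \emph{constant} if it has the same value under every assignment to $\mathbf{v}_{k+1},\dots,\mathbf{v}_n$. I would prove separately that (a) ${\sf rf}_k[\mathbf{s}_1,\dots,\mathbf{s}_k/\mathbf{v}_1,\dots,\mathbf{v}_k]\equiv\T$ iff $\delta$ is constant, and (b) $M_k(\mathbf{s}_1,\dots,\mathbf{s}_k)\in{\sf RF}$ iff $\delta$ is constant, and then compose. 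Part (a) is pure propositional bookkeeping: after the substitution, ${\sf rf}_k$ becomes the closed formula $\forall\mathbf{v}_{k+1}\dots\forall\mathbf{v}_n\forall\mathbf{v}'_{k+1}\dots\forall\mathbf{v}'_n(\delta\to\delta\langle\mathbf{v}'_{k+1},\dots,\mathbf{v}'_n/\mathbf{v}_{k+1},\dots,\mathbf{v}_n\rangle)$, in which the primed block is an independent fresh copy of the free variables of $\delta$. Hence this formula is semantically equivalent to ``if $\delta$ is true under some assignment then $\delta$ is true under every assignment'', i.e.\ to ``$\delta$ is constantly $\F$ or constantly $\T$'', i.e.\ to ``$\delta$ is constant''.

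For the easy direction of (b) ($\delta$ constant $\Rightarrow$ $M_k(\mathbf{s}_1,\dots,\mathbf{s}_k)\in{\sf RF}$), I would split on the constant value. If $\delta\equiv\T$, then no action profile of $M_k(\mathbf{s}_1,\dots,\mathbf{s}_k)$ falsifies the constraint, so item~1 of Definition~\ref{responsibility} never holds and no agent is ever responsible. If $\delta\equiv\F$, then item~1 always holds but item~2 always fails, since no tuple $\mathbf{t}$ can make a constantly false formula true; again no agent is ever responsible. Either way the mechanism is in ${\sf RF}$ by Definition~\ref{RF}.

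The real content is the contrapositive of the remaining direction: if $\delta$ is not constant, then $M_k(\mathbf{s}_1,\dots,\mathbf{s}_k)\notin{\sf RF}$. I would argue this by induction on the number $n-k$ of remaining agents. When $n-k=0$, the formula $\delta$ is closed, hence constant, so there is nothing to prove. For the inductive step, let $\delta$ be non-constant and look at the first remaining agent $k+1$. If some tuple $\mathbf{t}\in\{0,1\}^{|\mathbf{v}_{k+1}|}$ makes $\delta[\mathbf{t}/\mathbf{v}_{k+1}]$ constantly true, then, since $\delta$ is non-constant it has some falsifying assignment, so one can choose an action profile $\mathbf{s}_1,\dots,\mathbf{s}_n$ of $M$ that agrees with $\mathbf{s}_1,\dots,\mathbf{s}_k$ and falsifies $\delta$; agent $k+1$ is the first agent of $M_k(\mathbf{s}_1,\dots,\mathbf{s}_k)$, and the witness $\mathbf{t}$ makes it responsible there by Definition~\ref{responsibility}. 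Otherwise no choice of agent $k+1$ makes $\delta[\mathbf{t}/\mathbf{v}_{k+1}]$ constantly true; then some choice $\mathbf{t}$ must make $\delta[\mathbf{t}/\mathbf{v}_{k+1}]$ non-constant, because if every choice gave a constant (hence constantly false) formula, $\delta$ itself would be constantly false. For such a $\mathbf{t}$, the partial mechanism $M_{k+1}(\mathbf{s}_1,\dots,\mathbf{s}_k,\mathbf{t})$ has deontic constraint $\delta[\mathbf{t}/\mathbf{v}_{k+1}]$, which is non-constant, so by the induction hypothesis it is not in ${\sf RF}$: some agent $i$ is responsible under some profile $\mathbf{s}_{k+2},\dots,\mathbf{s}_n$ of it, where $i<n-k$. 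Lemma~\ref{AAA} then converts this into the responsibility of agent $i+1$ under the profile $\mathbf{t},\mathbf{s}_{k+2},\dots,\mathbf{s}_n$ of $M_k(\mathbf{s}_1,\dots,\mathbf{s}_k)$, so $M_k(\mathbf{s}_1,\dots,\mathbf{s}_k)\notin{\sf RF}$.

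The main obstacle is the bookkeeping in this inductive step: making sure the index shift between ``agent $i$ in $M_{k+1}(\dots,\mathbf{t})$'' and ``agent $i+1$ in $M_k(\dots)$'' is exactly the one supplied by Lemma~\ref{AAA} (including verifying the side condition $i<n-k$), and making sure the residual ``every choice of agent $k+1$ yields a constantly false constraint'' case is genuinely impossible under non-constancy of $\delta$, rather than merely atypical. Everything else is routine. Composing (a) and (b): ${\sf rf}_k[\mathbf{s}_1,\dots,\mathbf{s}_k/\mathbf{v}_1,\dots,\mathbf{v}_k]\equiv\T$ iff $\delta$ is constant iff $M_k(\mathbf{s}_1,\dots,\mathbf{s}_k)\in{\sf RF}$, which is the statement of the lemma.
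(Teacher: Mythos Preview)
Your proposal is correct and follows essentially the same approach as the paper: the paper's $(\Leftarrow)$ direction is your ``easy direction'' of (b), and the paper's $(\Rightarrow)$ direction is your ``hard direction'' argued contrapositively, with the identical backward induction on $k$ (equivalently, induction on $n-k$), the same case split on whether the first remaining agent has a guaranteeing action, and the same appeal to Lemma~\ref{AAA} in the residual case. The only difference is cosmetic---you factor through the intermediate property ``$\delta$ is constant'' (your part (a)), whereas the paper works directly with the formula ${\sf rf}_k$; the paper's Case~2 ($\delta$ constantly false) does not appear in your contrapositive because it is already excluded by the hypothesis that $\delta$ is non-constant.
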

\begin{proof}
$(\Rightarrow):$
We prove this direction by backward induction on $k$. If $k=n$, then,
by equation~\eqref{rf}, formula ${\sf rf}_n$ is the tautology $\gamma\to\gamma$. Therefore, ${\sf rf}_n[\mathbf{s}_1,\dots,\mathbf{s}_n/\mathbf{v}_1,\dots,\mathbf{v}_n]\equiv\T$. 

Suppose that $k<n$. We consider the following three cases separately:

\vspace{1mm}\noindent
{\em Case 1}: 
$(\exists \mathbf{v}_{k+1}\forall \mathbf{v}_{k+2}\dots \mathbf{v}_n\gamma)[\mathbf{s}_1,\dots,\mathbf{s}_k/\mathbf{v}_1,\dots,\mathbf{v}_k]\equiv \T$. Thus,
$$
{\sf cf}_{k+1}[\mathbf{s}_1,\dots,\mathbf{s}_k/\mathbf{v}_1,\dots,\mathbf{v}_k]\equiv \T
$$
by equation~\eqref{cf}. Thus, by the assumption $M_k(\mathbf{s}_1,\dots,\mathbf{s}_k)\in  {\sf RF}$ of the lemma,  Definition~\ref{RF} and Lemma~\ref{cf lemma},
$$
\gamma[\mathbf{s}_1,\dots,\mathbf{s}_k/\mathbf{v}_1,\dots,\mathbf{v}_k][\mathbf{s}_{k+1},\dots,\mathbf{s}_{n}/\mathbf{v}_{k+1},\dots,\mathbf{v}_k]\equiv \T
$$
for each sequence of Boolean tuples $\mathbf{s}_{k+1},\dots,\mathbf{s}_{n}$. Note that sets $\mathbf{v}_1,\dots,\mathbf{v}_n$ are disjoint by Definition~\ref{mechanism definition}. Hence,
$
(\forall\mathbf{v}_{k+1} \dots \mathbf{v}_n\gamma)[\mathbf{s}_1,\dots,\mathbf{s}_k/\mathbf{v}_1,\dots,\mathbf{v}_k]\equiv \T
$.
Therefore, 
${\sf rf}_k[\mathbf{s}_1,\dots,\mathbf{s}_k/\mathbf{v}_1,\dots,\mathbf{v}_k]\equiv\T$
by equation~\eqref{rf}.

\vspace{1mm}\noindent
{\em Case 2}: 
$(\forall\mathbf{v}_{k+1}\forall \mathbf{v}_{k+2}\dots \mathbf{v}_n\neg\gamma)[\mathbf{s}_1,\dots,\mathbf{s}_k/\mathbf{v}_1,\dots,\mathbf{v}_k]\equiv \T$. Thus, 
$${\sf rf}_k[\mathbf{s}_1,\dots,\mathbf{s}_k/\mathbf{v}_1,\dots,\mathbf{v}_k]\equiv\T$$ by equation~\eqref{rf}.

\vspace{1mm}\noindent
{\em Case 3}: the following takes place:
\begin{align*}
&(\exists \mathbf{v}_{k+1}\forall \mathbf{v}_{k+2}\dots \mathbf{v}_n\gamma)[\mathbf{s}_1,\dots,\mathbf{s}_k/\mathbf{v}_1,\dots,\mathbf{v}_k]\equiv \F,\\
&(\forall\mathbf{v}_{k+1}\forall \mathbf{v}_{k+2}\dots \mathbf{v}_n\neg\gamma)[\mathbf{s}_1,\dots,\mathbf{s}_k/\mathbf{v}_1,\dots,\mathbf{v}_k]\equiv \F.
\end{align*}
Thus,
\begin{align*}
&(\forall \mathbf{v}_{k+1}\exists \mathbf{v}_{k+2}\dots \mathbf{v}_n\neg\gamma)[\mathbf{s}_1,\dots,\mathbf{s}_k/\mathbf{v}_1,\dots,\mathbf{v}_k]\equiv \T,\\
&(\exists\mathbf{v}_{k+1}\exists \mathbf{v}_{k+2}\dots \mathbf{v}_n\gamma)[\mathbf{s}_1,\dots,\mathbf{s}_k/\mathbf{v}_1,\dots,\mathbf{v}_k]\equiv \T.
\end{align*}
Hence, there exists a Boolean tuple $\mathbf{s}_{k+1}\in \{0,1\}^{|v_{k+1}|}$ such that
\begin{align*}
(\exists\mathbf{v}_{k+2}\dots \mathbf{v}_n\neg\gamma)[\mathbf{s}_1,\dots,\mathbf{s}_k,\mathbf{s}_{k+1}/\mathbf{v}_1,\dots,\mathbf{v}_k,\mathbf{v}_{k+1}]\equiv \T,\\
(\exists\mathbf{v}_{k+2}\dots \mathbf{v}_n\gamma)[\mathbf{s}_1,\dots,\mathbf{s}_k,\mathbf{s}_{k+1}/\mathbf{v}_1,\dots,\mathbf{v}_k,\mathbf{v}_{k+1}]\equiv \T.    
\end{align*}
Then,
$
{\sf rf}_{k+1}[\mathbf{s}_1,\dots,\mathbf{s}_k,\mathbf{s}_{k+1}/\mathbf{v}_1,\dots,\mathbf{v}_k,\mathbf{v}_{k+1}]\equiv \F
$
by equation~\eqref{rf}.
Thus, 
by the induction hypothesis,
$$M_{k+1}(\mathbf{s}_1,\dots,\mathbf{s}_k,\mathbf{s}_{k+1})\notin  {\sf RF}.$$
Therefore, 
$M_k(\mathbf{s}_1,\dots,\mathbf{s}_k)\notin  {\sf RF}$ by Lemma~\ref{AAA}.

\noindent
$(\Leftarrow):$  Suppose that 
${\sf rf}_k[\mathbf{s}_1,\dots,\mathbf{s}_k/\mathbf{v}_1,\dots,\mathbf{v}_k]\equiv\T$.
Thus, by equation~\eqref{rf} and Definition~\ref{partial mechanism}, deontic constraint  $\gamma[\mathbf{s}_1,\dots,\mathbf{s}_k/\mathbf{v}_1,\dots,\mathbf{v}_k]$ of mechanism $M_k(\mathbf{s}_1,\dots,\mathbf{s}_k)$ is either satisfied for each action profile or not satisfied for each action profile. Thus, by Definition~\ref{responsibility}, no agent can be responsible under any of the action profiles. Then, $M_k(\mathbf{s}_1,\dots,\mathbf{s}_k)\in  {\sf RF}$ by Definition~\ref{RF}.
\end{proof}

\subsection{Gap and Diffusion Problem}\label{Gap and Diffusion Problem section}

In this section, we prove that the intersection of sets {\sf GF} and {\sf DF} is $\Pi_2$-complete. We call this intersection {\sf GDF} (``gap-and-diffusion-free'').

\begin{definition}\label{GDF definition}
Set {\sf GDF} contains all such mechanisms $(n,\mathbf{v},\gamma)$  that for any action profile $\mathbf{s}_1,\dots,\mathbf{s}_n$ where $\gamma[\mathbf{s}_1,\dots,\mathbf{s}_n/\mathbf{v}_1,\dots,\mathbf{v}_n]\equiv\F$ there is exactly one agent resposponsible under profile $\mathbf{s}_1,\dots,\mathbf{s}_n$.
\end{definition}

\begin{lemma}\label{green layer lemma}
For any mechanism $M=(n,\mathbf{v},\gamma)$, any integer $k<n$, and any tuples $\mathbf{s}_1,\dots,\mathbf{s}_k$
such that
 $\exists \mathbf{v}_{k+1}\forall\mathbf{v}_{k+2}\dots \mathbf{v}_n (\gamma[\mathbf{s}_1,\dots,\mathbf{s}_k/\mathbf{v}_1,\dots,\mathbf{v}_k])\equiv\T$, the following two conditions are equivalent:
 \begin{enumerate}
    \item $M_k(\mathbf{s}_1,\dots,\mathbf{s}_k)\in  {\sf GDF}$,
     \item $M_{k+1}(\mathbf{s}_1,\dots,\mathbf{s}_{k},\mathbf{s}_{k+1})\in  {\sf RF}$ for each tuple $\mathbf{s}_{k+1}\in \{0,1\}^{\mathbf{v}_{k+1}}$.
 \end{enumerate}
\end{lemma}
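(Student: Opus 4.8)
The plan is to prove the two implications of the equivalence separately, using Lemma~\ref{AAA} to translate responsibility statements between $M_k$ and $M_{k+1}$, and Lemma~\ref{RF lemma} to recognise responsibility-freeness of partial mechanisms via the formula ${\sf rf}$. Throughout, write $M_k$ for $M_k(\mathbf{s}_1,\dots,\mathbf{s}_k)$ and $M_{k+1}^{\mathbf{s}_{k+1}}$ for $M_{k+1}(\mathbf{s}_1,\dots,\mathbf{s}_k,\mathbf{s}_{k+1})$. Note first that the hypothesis $\exists \mathbf{v}_{k+1}\forall\mathbf{v}_{k+2}\dots \mathbf{v}_n (\gamma[\mathbf{s}_1,\dots,\mathbf{s}_k/\mathbf{v}_1,\dots,\mathbf{v}_k])\equiv\T$ says exactly, by equation~\eqref{cf}, that ${\sf cf}_1\equiv\T$ for the mechanism $M_k$; equivalently, the first agent of $M_k$ has a strategy to guarantee $\gamma$. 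This is the key structural fact that forces agent~1 of $M_k$ to be responsible under \emph{every} action profile violating $\gamma$, so that ``exactly one responsible agent'' must mean ``agent~1 and no one else''.

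For $(2)\Rightarrow(1)$: assume $M_{k+1}^{\mathbf{s}_{k+1}}\in{\sf RF}$ for every $\mathbf{s}_{k+1}$. Take any action profile $\mathbf{s}_{k+1},\dots,\mathbf{s}_n$ of $M_k$ with $\gamma[\mathbf{s}_1,\dots,\mathbf{s}_n/\mathbf{v}_1,\dots,\mathbf{v}_n]\equiv\F$. I first show agent~1 of $M_k$ is responsible: item~1 of Definition~\ref{responsibility} holds by assumption, and item~2 holds by Lemma~\ref{cf lemma} together with ${\sf cf}_1\equiv\T$ for $M_k$. Next I show no other agent $i+1$ (with $i\ge 1$) of $M_k$ is responsible under this profile: by Lemma~\ref{AAA} (applied with the roles $M_{k}$, $M_{k+1}$), responsibility of agent $i+1$ in $M_k$ under $\mathbf{s}_{k+1},\dots,\mathbf{s}_n$ is equivalent to responsibility of agent $i$ in $M_{k+1}^{\mathbf{s}_{k+1}}$ under $\mathbf{s}_{k+2},\dots,\mathbf{s}_n$; but $M_{k+1}^{\mathbf{s}_{k+1}}\in{\sf RF}$ means no agent is responsible there. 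Hence agent~1 is the unique responsible agent, so $M_k\in{\sf GDF}$.

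For $(1)\Rightarrow(2)$: assume $M_k\in{\sf GDF}$ and fix a tuple $\mathbf{s}_{k+1}$. I must show $M_{k+1}^{\mathbf{s}_{k+1}}\in{\sf RF}$, i.e.\ no agent $i$ of $M_{k+1}^{\mathbf{s}_{k+1}}$ is responsible under any profile $\mathbf{s}_{k+2},\dots,\mathbf{s}_n$. Suppose for contradiction some agent $i$ of $M_{k+1}^{\mathbf{s}_{k+1}}$ is responsible under some profile $\mathbf{s}_{k+2},\dots,\mathbf{s}_n$. By Lemma~\ref{AAA}, agent $i+1$ of $M_k$ is then responsible under $\mathbf{s}_{k+1},\mathbf{s}_{k+2},\dots,\mathbf{s}_n$; in particular $\gamma[\mathbf{s}_1,\dots,\mathbf{s}_n/\mathbf{v}_1,\dots,\mathbf{v}_n]\equiv\F$ by item~1 of Definition~\ref{responsibility}. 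Now I invoke the hypothesis again via Lemma~\ref{cf lemma}: since ${\sf cf}_1\equiv\T$ for $M_k$ and $\gamma$ is false under this profile, agent~1 of $M_k$ is \emph{also} responsible under $\mathbf{s}_{k+1},\dots,\mathbf{s}_n$. Because $i+1\neq 1$, this gives two distinct responsible agents in $M_k$ under one profile, contradicting $M_k\in{\sf GDF}$ (which permits at most one, indeed exactly one). Therefore no such $i$ exists and $M_{k+1}^{\mathbf{s}_{k+1}}\in{\sf RF}$.

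The main obstacle I anticipate is purely bookkeeping: applying Lemma~\ref{AAA} requires lining up the index shifts correctly (agent $i$ of the $(k+1)$-truncation corresponds to agent $i+1$ of the $k$-truncation, and the profile tail reindexes from $\mathbf{v}_{k+2},\dots$ to $\mathbf{v}_{k+1},\mathbf{v}_{k+2},\dots$), and one must also separately handle whether agent~1 of $M_k$ is ``agent $i+1$ for some $i$'' versus a genuinely new case — but since $k<n$ guarantees $M_k$ has at least one agent, agent~1 is well-defined and the argument that it is always responsible (given the ${\sf cf}_1\equiv\T$ hypothesis) is the one genuinely new ingredient beyond the cited lemmas. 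No case analysis on the value of $\gamma$ is needed beyond the single profile under consideration.
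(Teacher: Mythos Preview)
Your proof is correct and follows essentially the same approach as the paper: both use the hypothesis to see that agent~1 of $M_k$ is responsible under every $\gamma$-violating profile, and both invoke Lemma~\ref{AAA} to identify responsibility of agents $\ge 2$ in $M_k$ with responsibility in $M_{k+1}^{\mathbf{s}_{k+1}}$. The only cosmetic differences are that the paper proves $(2)\Rightarrow(1)$ by contraposition (assume $M_k\notin{\sf GDF}$, deduce $M_k\in{\sf GF}$ hence $M_k\notin{\sf DF}$, then extract a responsible agent $\ge 2$) whereas you argue directly, and that your plan mentions Lemma~\ref{RF lemma} although neither you nor the paper actually uses it here.
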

\begin{proof}
\noindent$(1\Rightarrow 2):$
Suppose that $M_{k+1}(\mathbf{s}_1,\dots,\mathbf{s}_{k},\mathbf{s}_{k+1})\notin  {\sf RF}$ for some tuple $\mathbf{s}_{k+1}\in \{0,1\}^{\mathbf{v}_{k+1}}$. Thus, by Definition~\ref{RF}, there is an agent $i<n-k$ responsible under some profile   
$\mathbf{s}_{k+2},\dots,\mathbf{s}_{n}$ of the mechanism $M_{k+1}(\mathbf{s}_1,\dots,\mathbf{s}_{k},\mathbf{s}_{k+1})$.
Hence, by Lemma~\ref{AAA}, 
agent $i+1$ is responsible under the profile
$\mathbf{s}_{k+1},\mathbf{s}_{k+2},\dots,\mathbf{s}_{n}$ of the mechanism $M_k(\mathbf{s}_1,\dots,\mathbf{s}_k)$. Thus, by Definition~\ref{responsibility}, it means that
\begin{equation}\label{11-aug-a}
(\gamma[\mathbf{s}_1,\dots,\mathbf{s}_k/\mathbf{v}_1,\dots,\mathbf{v}_k])[\mathbf{s}_{k+1},\dots,\mathbf{s}_n/\mathbf{v}_{k+1},\dots,\mathbf{v}_n]\equiv\F    
\end{equation}
because the formula $\gamma[\mathbf{s}_1,\dots,\mathbf{s}_k/\mathbf{v}_1,\dots,\mathbf{v}_k]$ is the deontic constraint of mechanism $M_k(\mathbf{s}_1,\dots,\mathbf{s}_k)$.

Additionally, agent $1$ is also responsible under the profile
$\mathbf{s}_{k+1},\mathbf{s}_{k+2},\dots,\mathbf{s}_{n}$ of the mechanism $M_k(\mathbf{s}_1,\dots,\mathbf{s}_k)$ by 
the assumption  
$$\exists \mathbf{v}_{k+1}\forall\mathbf{v}_{k+2}\dots \mathbf{v}_n (\gamma[\mathbf{s}_1,\dots,\mathbf{s}_k/\mathbf{v}_1,\dots,\mathbf{v}_k])\equiv\T$$ of the lemma,
Definition~\ref{responsibility}, and statement~\eqref{11-aug-a}.
Hence, under the profile $\mathbf{s}_{k+1},\mathbf{s}_{k+2},\dots,\mathbf{s}_{n}$,
the mechanism $M_k(\mathbf{s}_1,\dots,\mathbf{s}_k)$ has at least two agents responsible. Therefore, $M_k(\mathbf{s}_1,\dots,\mathbf{s}_k)\notin  {\sf GDF}$ by Definition~\ref{GDF definition}.

\noindent$(2\Rightarrow 1):$
Suppose that 
\begin{equation}\label{11-aug-b}
M_k(\mathbf{s}_1,\dots,\mathbf{s}_k)\notin  {\sf GDF}.    
\end{equation}
Observe that the assumption 
$$\exists \mathbf{v}_{k+1}\forall\mathbf{v}_{k+2}\dots \mathbf{v}_n (\gamma[\mathbf{s}_1,\dots,\mathbf{s}_k/\mathbf{v}_1,\dots,\mathbf{v}_k])\equiv\T$$ 
of the lemma implies that agent 1 is responsible under each action profile $\mathbf{s}_{k+1},\dots,\mathbf{s}_n$ of the mechanism $M_k(\mathbf{s}_1,\dots,\mathbf{s}_k)$ such that 
$$
(\gamma[\mathbf{s}_1,\dots,\mathbf{s}_k/\mathbf{v}_1,\dots,\mathbf{v}_k])[\mathbf{s}_{k+1},\dots,\mathbf{s}_n/\mathbf{v}_1,\dots,\mathbf{v}_k]\equiv\F.
$$
Then, $M_k(\mathbf{s}_1,\dots,\mathbf{s}_k)\in  {\sf GF}$
by Definition~\ref{GF definition}. 
Thus,
$M_k(\mathbf{s}_1,\dots,\mathbf{s}_k)\notin  {\sf DF}$ by assumption~\eqref{11-aug-b}, Definition~\ref{GF definition}, Definition~\ref{DF definition}, and Definition~\ref{GDF definition}.
Hence, by Definition~\ref{DF definition}, at least two distinct agents are responsible under some profile $\mathbf{s}_{k+1},\dots,\mathbf{s}_n$ of the mechanism
$M_k(\mathbf{s}_1,\dots,\mathbf{s}_k)$. Thus, there is at least one agent $i\ge 2$ responsible under the profile $\mathbf{s}_{k+1},\dots,\mathbf{s}_n$ of the mechanism
$M_k(\mathbf{s}_1,\dots,\mathbf{s}_k)$.
Then, by Lemma~\ref{AAA}, agent $i-1$ is responsible under the profile $\mathbf{s}_{k+2},\dots,\mathbf{s}_n$ of the mechanism
$M_{k+1}(\mathbf{s}_1,\dots,\mathbf{s}_k,\mathbf{s}_{k+1})$.
Therefore, $M_{k+1}(\mathbf{s}_1,\dots,\mathbf{s}_k,\mathbf{s}_{k+1})\notin {\sf RF}$ by Definition~\ref{RF}.
\end{proof}

\begin{lemma}\label{no green layer lemma}
For any mechanism $M=(n,\mathbf{v},\gamma)$, any integer $k<n$, and any tuples $\mathbf{s}_1,\dots,\mathbf{s}_k$
such that
 $\exists \mathbf{v}_{k+1}\forall\mathbf{v}_{k+2}\dots \mathbf{v}_n (\gamma[\mathbf{s}_1,\dots,\mathbf{s}_k/\mathbf{v}_1,\dots,\mathbf{v}_k])\equiv\F$, the following two conditions are equivalent:
 \begin{enumerate}
    \item $M_k(\mathbf{s}_1,\dots,\mathbf{s}_k)\in  {\sf GDF}$,
     \item $M_{k+1}(\mathbf{s}_1,\dots,\mathbf{s}_{k},\mathbf{s}_{k+1})\in  {\sf GDF}$ for each tuple $\mathbf{s}_{k+1}\in \{0,1\}^{\mathbf{v}_{k+1}}$.
 \end{enumerate}
\end{lemma}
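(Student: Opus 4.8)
The plan is to reduce the whole statement to one structural observation: under the hypothesis, agent $1$ of the partial mechanism $M_k(\mathbf{s}_1,\dots,\mathbf{s}_k)$ (i.e.\ agent $k+1$ of $M$) is never responsible, while the responsibility status of every \emph{other} agent of $M_k(\mathbf{s}_1,\dots,\mathbf{s}_k)$ is inherited verbatim, via Lemma~\ref{AAA}, from the further partial mechanisms $M_{k+1}(\mathbf{s}_1,\dots,\mathbf{s}_k,\mathbf{s}_{k+1})$. Consequently, along any profile, the \emph{number} of responsible agents — and whether the deontic constraint is violated — is identical on the two sides, which is exactly what ${\sf GDF}$ ($=$ ``exactly one responsible agent on every violating profile'') is sensitive to.

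First I would unpack the hypothesis. By Definition~\ref{partial mechanism}, the deontic constraint of $M_k(\mathbf{s}_1,\dots,\mathbf{s}_k)$ is $\gamma[\mathbf{s}_1,\dots,\mathbf{s}_k/\mathbf{v}_1,\dots,\mathbf{v}_k]$, and by equation~\eqref{cf} the formula ${\sf cf}_1$ for this mechanism is precisely $\exists\mathbf{v}_{k+1}\forall\mathbf{v}_{k+2}\dots\forall\mathbf{v}_n(\gamma[\mathbf{s}_1,\dots,\mathbf{s}_k/\mathbf{v}_1,\dots,\mathbf{v}_k])$. So the hypothesis says exactly that this ${\sf cf}_1$ is $\equiv\F$; by item~2 of Definition~\ref{responsibility} (equivalently, by Lemma~\ref{cf lemma}), the first agent of $M_k(\mathbf{s}_1,\dots,\mathbf{s}_k)$ is not responsible under any action profile.

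Next I would record two bookkeeping facts linking a profile $\mathbf{s}_{k+1},\mathbf{s}_{k+2},\dots,\mathbf{s}_n$ of $M_k(\mathbf{s}_1,\dots,\mathbf{s}_k)$ with the profile $\mathbf{s}_{k+2},\dots,\mathbf{s}_n$ of $M_{k+1}(\mathbf{s}_1,\dots,\mathbf{s}_k,\mathbf{s}_{k+1})$. (i) Using Definition~\ref{partial mechanism} and the disjointness of $\mathbf{v}_1,\dots,\mathbf{v}_n$ from Definition~\ref{mechanism definition}, the deontic constraint of $M_{k+1}(\mathbf{s}_1,\dots,\mathbf{s}_k,\mathbf{s}_{k+1})$ evaluated on $\mathbf{s}_{k+2},\dots,\mathbf{s}_n$ equals $\gamma[\mathbf{s}_1,\dots,\mathbf{s}_n/\mathbf{v}_1,\dots,\mathbf{v}_n]$, which is also the deontic constraint of $M_k(\mathbf{s}_1,\dots,\mathbf{s}_k)$ evaluated on $\mathbf{s}_{k+1},\dots,\mathbf{s}_n$; hence one is violated iff the other is. (ii) By Lemma~\ref{AAA} applied at this same $k$, for each agent $i<n-k$, agent $i$ is responsible under $\mathbf{s}_{k+2},\dots,\mathbf{s}_n$ in $M_{k+1}(\mathbf{s}_1,\dots,\mathbf{s}_k,\mathbf{s}_{k+1})$ iff agent $i+1$ is responsible under $\mathbf{s}_{k+1},\dots,\mathbf{s}_n$ in $M_k(\mathbf{s}_1,\dots,\mathbf{s}_k)$. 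Combining (ii) with the previous paragraph (agent $1$ of $M_k(\mathbf{s}_1,\dots,\mathbf{s}_k)$ is never responsible), the set of responsible agents of $M_k(\mathbf{s}_1,\dots,\mathbf{s}_k)$ under $\mathbf{s}_{k+1},\dots,\mathbf{s}_n$ is exactly $\{\,i+1 : \text{agent } i \text{ responsible in } M_{k+1}(\mathbf{s}_1,\dots,\mathbf{s}_k,\mathbf{s}_{k+1}) \text{ under } \mathbf{s}_{k+2},\dots,\mathbf{s}_n\,\}$, so in particular the counts agree.

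Finally I would assemble the equivalence. Every action profile of $M_k(\mathbf{s}_1,\dots,\mathbf{s}_k)$ is of the form $\mathbf{s}_{k+1},\mathbf{s}_{k+2},\dots,\mathbf{s}_n$ with $\mathbf{s}_{k+1}\in\{0,1\}^{\mathbf{v}_{k+1}}$, and then $\mathbf{s}_{k+2},\dots,\mathbf{s}_n$ ranges over exactly the profiles of $M_{k+1}(\mathbf{s}_1,\dots,\mathbf{s}_k,\mathbf{s}_{k+1})$. By (i) the violating profiles match up, and by (ii) the number of responsible agents on a violating profile is preserved; therefore ``on every violating profile there is exactly one responsible agent'' holds for $M_k(\mathbf{s}_1,\dots,\mathbf{s}_k)$ iff it holds for $M_{k+1}(\mathbf{s}_1,\dots,\mathbf{s}_k,\mathbf{s}_{k+1})$ for every $\mathbf{s}_{k+1}$, which is conditions~1 and~2 by Definition~\ref{GDF definition}. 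If the contrapositive style used in Lemma~\ref{green layer lemma} is preferred, this splits cleanly into $(1\Rightarrow2)$ and $(2\Rightarrow1)$, each witnessed by a single profile. The step I expect to be the main obstacle is (ii) and its consequence: Lemma~\ref{AAA} only talks about agents $1,\dots,n-k-1$ of $M_{k+1}(\dots)$ and agents $2,\dots,n-k$ of $M_k(\dots)$, so the ``extra'' first agent of $M_k(\dots)$ must be discharged separately — and that is precisely where the hypothesis $\exists\mathbf{v}_{k+1}\forall\mathbf{v}_{k+2}\dots\mathbf{v}_n(\gamma[\mathbf{s}_1,\dots,\mathbf{s}_k/\mathbf{v}_1,\dots,\mathbf{v}_k])\equiv\F$ is used. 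The remainder is routine substitution bookkeeping.
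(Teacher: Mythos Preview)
Your proposal is correct and follows essentially the same approach as the paper's proof: the hypothesis eliminates responsibility of agent~$1$ in $M_k(\mathbf{s}_1,\dots,\mathbf{s}_k)$, after which Lemma~\ref{AAA} furnishes a bijection between responsible agents of $M_k(\mathbf{s}_1,\dots,\mathbf{s}_k)$ and those of $M_{k+1}(\mathbf{s}_1,\dots,\mathbf{s}_k,\mathbf{s}_{k+1})$ along matching profiles, so the ${\sf GDF}$ condition transfers. The paper compresses this into two sentences, whereas you have usefully spelled out the bookkeeping facts (i) and (ii) that the paper leaves implicit; in particular your explicit observation that violating profiles correspond is a detail the paper suppresses but does rely on.
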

\begin{proof}
By Definition~\ref{responsibility}, the assumption of the lemma implies that agent $1$ is not responsible under any profile of the mechanism $M_k(\mathbf{s}_1,\dots,\mathbf{s}_k)$. Thus, the statement of the lemma follows from Lemma~\ref{AAA} and Definition~\ref{GDF definition}.
\end{proof}

In the previous sections, we have been able to give formulae {\sf df}, {\sf gf}, and {\sf rf} for the sets {\sf DF}, {\sf GF}, and {\sf RF} explicitly. Defining such a formula for set {\sf GDF} is a bit more involved. We do this through backward induction. Namely, let, for any $k$ such that $0\le k\le n$,
\begin{equation}\label{gdfi formula}
{\sf gdf}_k=
\begin{cases}
\gamma & \text{if $k=n$},\\ 
((\exists \mathbf{v}_{k+1}\forall\mathbf{v}_{k+2}\dots \mathbf{v}_n \gamma)\wedge \forall \mathbf{v}_{k+1}\,{\sf rf}_{k+1}) \\
\hspace{10mm}\vee (\neg(\exists \mathbf{v}_{k+1}\forall\mathbf{v}_{k+2}\dots \mathbf{v}_n \gamma)
\wedge\forall \mathbf{v}_{k+1}\,{\sf gdf}_{k+1}) &\text{if $k<n$.}
\end{cases}
\end{equation}
Note that formula ${\sf gdf}_k$ has free variables from sets $\mathbf{v}_1$,\dots,$\mathbf{v}_k$. In particular, formula ${\sf gdf}_0$ is a closed formula. In Claim~\ref{gdf0 claim}, later in this proof, we show that ${\sf gdf}_0$ gives the characterisation of set {\sf GDF} that we are looking for.

\begin{lemma}
Set {\sf GDF} belongs to class $\Pi_2$. 
\end{lemma}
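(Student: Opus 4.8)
The plan has two parts. The first part is to verify the correctness of the recursively defined formula ${\sf gdf}_0$: namely, that $(n,\mathbf{v},\gamma)\in{\sf GDF}$ if and only if ${\sf gdf}_0(n,\mathbf{v},\gamma)\equiv\T$, the characterisation announced right after equation~\eqref{gdfi formula}. The second part is to show that ${\sf gdf}_0$ is equivalent to a $\Pi_2$-formula of polynomial size; together with the first part this yields ${\sf GDF}\in\Pi_2$. The real work lies in the second part.

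For the first part I would prove, by backward induction on $k$ from $k=n$ down to $k=0$, that for every action profile $\mathbf{s}_1,\dots,\mathbf{s}_k$ of $M=(n,\mathbf{v},\gamma)$ one has $M_k(\mathbf{s}_1,\dots,\mathbf{s}_k)\in{\sf GDF}$ iff ${\sf gdf}_k[\mathbf{s}_1,\dots,\mathbf{s}_k/\mathbf{v}_1,\dots,\mathbf{v}_k]\equiv\T$. In the base case $k=n$ the mechanism $M_n(\mathbf{s}_1,\dots,\mathbf{s}_n)$ has no agents, so by Definition~\ref{GDF definition} it belongs to ${\sf GDF}$ exactly when its closed deontic constraint is not falsified, which is precisely ${\sf gdf}_n=\gamma$ evaluated under the profile. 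For the inductive step I would fix $\mathbf{s}_1,\dots,\mathbf{s}_k$ and distinguish the two cases of equation~\eqref{gdfi formula} according to whether $(\exists\mathbf{v}_{k+1}\forall\mathbf{v}_{k+2}\dots\forall\mathbf{v}_n\gamma)[\mathbf{s}_1,\dots,\mathbf{s}_k/\mathbf{v}_1,\dots,\mathbf{v}_k]$ is $\T$ or $\F$. If it is $\T$, then Lemma~\ref{green layer lemma} reduces $M_k(\mathbf{s}_1,\dots,\mathbf{s}_k)\in{\sf GDF}$ to the statement that $M_{k+1}(\mathbf{s}_1,\dots,\mathbf{s}_k,\mathbf{s}_{k+1})\in{\sf RF}$ for every $\mathbf{s}_{k+1}$, and Lemma~\ref{RF lemma} turns this into $(\forall\mathbf{v}_{k+1}\,{\sf rf}_{k+1})[\mathbf{s}_1,\dots,\mathbf{s}_k/\mathbf{v}_1,\dots,\mathbf{v}_k]\equiv\T$, which is exactly the value of ${\sf gdf}_k$ in this case. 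If it is $\F$, then Lemma~\ref{no green layer lemma} reduces $M_k(\mathbf{s}_1,\dots,\mathbf{s}_k)\in{\sf GDF}$ to $M_{k+1}(\mathbf{s}_1,\dots,\mathbf{s}_k,\mathbf{s}_{k+1})\in{\sf GDF}$ for every $\mathbf{s}_{k+1}$, and the induction hypothesis turns this into $(\forall\mathbf{v}_{k+1}\,{\sf gdf}_{k+1})[\mathbf{s}_1,\dots,\mathbf{s}_k/\mathbf{v}_1,\dots,\mathbf{v}_k]\equiv\T$, again the value of ${\sf gdf}_k$. Specialising to $k=0$ gives the characterisation.

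For the second part I would carry out a second backward induction on $k$, showing that ${\sf gdf}_k$ is equivalent to a $\Pi_2$-formula with free variables among $\mathbf{v}_1,\dots,\mathbf{v}_k$; the base case ${\sf gdf}_n=\gamma$ is quantifier-free. For the inductive step, the second disjunct of equation~\eqref{gdfi formula} is straightforward: ${\sf cf}_{k+1}$ is $\Sigma_2$ by equation~\eqref{cf}, so $\neg(\exists\mathbf{v}_{k+1}\forall\mathbf{v}_{k+2}\dots\forall\mathbf{v}_n\gamma)$ is $\Pi_2$, the induction hypothesis makes $\forall\mathbf{v}_{k+1}\,{\sf gdf}_{k+1}$ a $\Pi_2$-formula, and $\Pi_2$ is closed under conjunction. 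The \emph{main obstacle} is the first disjunct ${\sf cf}_{k+1}\wedge\forall\mathbf{v}_{k+1}\,{\sf rf}_{k+1}$, which is only of the shape $\Sigma_2\wedge\Pi_1$ and so is not, as written, a $\Pi_2$-formula. The key observation I would use to get around this is that the conjunct $\forall\mathbf{v}_{k+1}\,{\sf rf}_{k+1}$ enforces the uniformity condition of equation~\eqref{rf}: whenever it holds, the value of $\gamma$ does not depend on $\mathbf{v}_{k+2},\dots,\mathbf{v}_n$ once $\mathbf{v}_1,\dots,\mathbf{v}_{k+1}$ are fixed, so $\forall\mathbf{v}_{k+2}\dots\forall\mathbf{v}_n\gamma$ and $\exists\mathbf{v}_{k+2}\dots\exists\mathbf{v}_n\gamma$ become equivalent, and hence ${\sf cf}_{k+1}$ collapses to the $\Sigma_1$-formula $\exists\mathbf{v}_{k+1}\dots\exists\mathbf{v}_n\gamma$. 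Consequently ${\sf cf}_{k+1}\wedge\forall\mathbf{v}_{k+1}\,{\sf rf}_{k+1}$ is equivalent to $(\exists\mathbf{v}_{k+1}\dots\exists\mathbf{v}_n\gamma)\wedge\forall\mathbf{v}_{k+1}\,{\sf rf}_{k+1}$, a conjunction of a $\Sigma_1$- and a $\Pi_1$-formula, which is $\Pi_2$. Since $\Pi_2$ is closed under disjunction as well --- the universal blocks of two $\Pi_2$ prefixes can be pulled out in front, and their existential blocks merged, after renaming the bound variables apart --- ${\sf gdf}_k$, being a disjunction of two $\Pi_2$-formulas, is equivalent to a $\Pi_2$-formula. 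Each of these transformations adds only $O(\sum_{i>k}|\mathbf{v}_i|)$ quantifiers and a bounded amount of matrix, and there are only $n$ levels, so unfolding ${\sf gdf}_0$ produces a $\Pi_2$-formula of size polynomial in that of the mechanism. Combined with the first part, this establishes ${\sf GDF}\in\Pi_2$.
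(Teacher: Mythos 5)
Your proposal is correct and takes essentially the same route as the paper: the same backward induction showing $M_k(\mathbf{s}_1,\dots,\mathbf{s}_k)\in{\sf GDF}$ iff ${\sf gdf}_k[\mathbf{s}_1,\dots,\mathbf{s}_k/\mathbf{v}_1,\dots,\mathbf{v}_k]\equiv\T$ via Lemma~\ref{green layer lemma}, Lemma~\ref{no green layer lemma}, and Lemma~\ref{RF lemma}, and the same key observation that the uniformity conjunct $\forall\mathbf{v}_{k+1}\,{\sf rf}_{k+1}$ collapses ${\sf cf}_{k+1}$ to a purely existential formula, so each disjunct of ${\sf gdf}_k$ is $\Pi_2$. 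Your explicit check that the unfolded formula stays polynomial in size is a useful detail the paper leaves implicit.
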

\begin{proof}
\begin{claim}\label{12-aug-a}
${\sf gdf}_k$ is a $\Pi_2$-formula.    
\end{claim}
\begin{proof-of-claim}
We prove the claim by backward induction on $k$. If $k=n$, then ${\sf gdf}_k=\gamma\in \Pi_2$ because Boolean formula $\gamma$ is quantifier-free by Definition~\ref{mechanism definition}.  

Suppose that $k<n$. Observe that, by equation~\eqref{rf}, the formula 
$$
{\sf rf}_{k+1}\to (\forall\mathbf{v}_{k+2}\dots \mathbf{v}_n \gamma
\leftrightarrow
\exists\mathbf{v}_{k+2}\dots \mathbf{v}_n \gamma
)
$$
has value 1 for all possible valuations of free variables. Hence, the same is true about the formula
$$
\forall\mathbf{v}_{k+1}{\sf rf}_{k+1}\to (\forall\mathbf{v}_{k+2}\dots \mathbf{v}_n \gamma
\leftrightarrow
\exists\mathbf{v}_{k+2}\dots \mathbf{v}_n \gamma
).
$$
Then, the same is also true about the formula
$$
\forall\mathbf{v}_{k+1}{\sf rf}_{k+1}\to (\exists\mathbf{v}_{k+1}\forall\mathbf{v}_{k+2}\dots \mathbf{v}_n \gamma
\leftrightarrow
\exists\mathbf{v}_{k+1}\exists\mathbf{v}_{k+2}\dots \mathbf{v}_n \gamma
).
$$
Hence,
$$(\exists\mathbf{v}_{k+1}\forall\mathbf{v}_{k+2}\dots \mathbf{v}_n \gamma)\wedge \forall\mathbf{v}_{k+1}{\sf rf}_{k+1}
\equiv
(\exists\mathbf{v}_{k+2}\dots \mathbf{v}_n \gamma)\wedge \forall\mathbf{v}_{k+1}{\sf rf}_{k+1}.
$$ 
Thus, the formula 
$(\exists\mathbf{v}_{k+1}\forall\mathbf{v}_{k+2}\dots \mathbf{v}_n \gamma)\wedge \forall\mathbf{v}_{k+1}{\sf rf}_{k+1}$ belongs to class  $\Pi_2$ by equation~\eqref{rf}.
Note that the formula 
$\neg\exists \mathbf{v}_{k+1}\forall\mathbf{v}_{k+2}\dots \mathbf{v}_n \gamma$ is also a $\Pi_2$-formula. 
Finally, 
$\forall \mathbf{v}_{k+1}{\sf gdk}_{k+1}$ is a $\Pi_2$-formula because, by the induction hypothesis, ${\sf gdk}_{k+1}$ is a $\Pi_2$-formula.
Therefore, ${\sf gdf}_k$ is a $\Pi_2$-formula by equation~\eqref{gdfi formula}.
\end{proof-of-claim}

\begin{claim}
$M_k(\mathbf{s}_1,\dots,\mathbf{s}_k)\in  {\sf GDF}$ iff ${\sf gdf}_k[\mathbf{s}_1,\dots,\mathbf{s}_k/\mathbf{v}_1,\dots,\mathbf{v}_k]\equiv\T$.
\end{claim}
\begin{proof-of-claim}
We prove the statement of the claim by backward induction on $k$. 

In the base case, when $k=n$, the mechanism $M_k(\mathbf{s}_1,\dots,\mathbf{s}_k)\in  {\sf GDF}$ has no agents at all, as we have discussed after Definition~\ref{partial mechanism}. Thus, none of them is responsible. In this case, the statement
$M_n(\mathbf{s}_1,\dots,\mathbf{s}_n)\in  {\sf GDF}$ is equivalent  to 
$\gamma[\mathbf{s}_1,\dots,\mathbf{s}_n/\mathbf{v}_1,\dots,\mathbf{v}_n]\equiv\T$
by Definition~\ref{GDF definition}. The last statement, by equation~\eqref{gdfi formula}, is equivalent to the statement ${\sf gdf}_n[\mathbf{s}_1,\dots,\mathbf{s}_n/\mathbf{v}_1,\dots,\mathbf{v}_n]\equiv\T$.

In the induction case, where $k<n$, we consider the following two cases separately:

\vspace{1mm}\noindent
{\em Case 1}: $\exists \mathbf{v}_{k+1}\forall\mathbf{v}_{k+2}\dots \mathbf{v}_n (\gamma[\mathbf{s}_1,\dots,\mathbf{s}_k/\mathbf{v}_1,\dots,\mathbf{v}_k])\equiv\T$. Then, by Lemma~\ref{green layer lemma}, the statement 
$M_k(\mathbf{s}_1,\dots,\mathbf{s}_k)\in  {\sf GDF}$
is equivalent to the statement
$$M_{k+1}(\mathbf{s}_1,\dots,\mathbf{s}_{k+1})\in  {\sf RF}\text{ for each tuple }\mathbf{s}_{k+1}\in\{0,1\}^{|\mathbf{v}_{k+1}|}.$$
By Lemma~\ref{RF lemma}, the above statement is equivalent to 
$${\sf rf}_{k+1}[\mathbf{s}_1,\dots,\mathbf{s}_{k+1}/\mathbf{v}_1,\dots,\mathbf{v}_{k+1}]\equiv\T\text{ for each tuple }\mathbf{s}_{k+1}\in\{0,1\}^{|\mathbf{v}_{k+1}|}.$$
The previous statement is equivalent to the statement
$$(\forall\mathbf{v}_{k+1}{\sf rf}_{k+1})[\mathbf{s}_1,\dots,\mathbf{s}_{k}/\mathbf{v}_1,\dots,\mathbf{v}_{k}]\equiv\T.$$ By the assumption of the case and equation~\eqref{gdfi formula}, the last statement is equivalent to the statement
${\sf gdf}_k[\mathbf{s}_1,\dots,\mathbf{s}_k/\mathbf{v}_1,\dots,\mathbf{v}_k]\equiv\T$.

\vspace{1mm}\noindent
{\em Case 2}: $\exists \mathbf{v}_{k+1}\forall\mathbf{v}_{k+2}\dots \mathbf{v}_n (\gamma[\mathbf{s}_1,\dots,\mathbf{s}_k/\mathbf{v}_1,\dots,\mathbf{v}_k])\equiv\F$. Then, by Lemma~\ref{no green layer lemma}, the statement
$M_k(\mathbf{s}_1,\dots,\mathbf{s}_k)\in  {\sf GDF}$
is equivalent to the statement
$$M_{k+1}(\mathbf{s}_1,\dots,\mathbf{s}_{k+1})\in  {\sf GDF}\text{ for each tuple }\mathbf{s}_{k+1}\in\{0,1\}^{|\mathbf{v}_{k+1}|}.$$
The above statement, by the induction hypothesis, is equivalent to 
$${\sf gdf}_{k+1}[\mathbf{s}_1,\dots,\mathbf{s}_{k+1}/\mathbf{v}_1,\dots,\mathbf{v}_{k+1}]\equiv\T\text{ for each tuple }\mathbf{s}_{k+1}\in\{0,1\}^{|\mathbf{v}_{k+1}|}.$$
The previous statement is equivalent to 
$$(\forall\mathbf{v}_{k+1}{\sf gdf}_{k+1})[\mathbf{s}_1,\dots,\mathbf{s}_{k+1}/\mathbf{v}_1,\dots,\mathbf{v}_{k+1}]\equiv\T.$$
By the assumption of the case and equation~\eqref{gdfi formula}, the last statement is equivalent to the statement
${\sf gdf}_k[\mathbf{s}_1,\dots,\mathbf{s}_k/\mathbf{v}_1,\dots,\mathbf{v}_k]\equiv\T$.
\end{proof-of-claim}

The next statement follows from the above claim and Definition~\ref{partial mechanism}.
\begin{claim}\label{gdf0 claim}
$(n,\mathbf{v},\gamma)\in {\sf GDF}$ iff  ${\sf gdf}_0\equiv\T$.
\end{claim}
The statement of the lemma follows from Claim~\ref{12-aug-a} and Claim~\ref{gdf0 claim}.
\end{proof}

\begin{lemma}
Set {\sf GDF} is $\Pi_2$-hard. 
\end{lemma}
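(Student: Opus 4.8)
The plan is to reuse, essentially verbatim, the reduction already built in the proof that ${\sf DF}$ is $\Pi_2$-hard. Given an arbitrary quantifier-free Boolean formula $\phi$ whose variables are partitioned into disjoint sets $\mathbf{x}$ and $\mathbf{y}$, I would take the mechanism $(n,\mathbf{v},\gamma)$ with $n=2$, $\mathbf{v}_1=\mathbf{x}\cup\{z_1\}$, $\mathbf{v}_2=\mathbf{y}\cup\{z_2\}$ for fresh variables $z_1,z_2$, and $\gamma=(\neg\phi\wedge z_1)\vee z_2$, exactly as before. Since deciding whether $\forall\mathbf{x}\exists\mathbf{y}\,\phi\equiv\T$ is $\Pi_2$-complete and the construction is polynomial, it suffices to prove that $\forall\mathbf{x}\exists\mathbf{y}\,\phi\equiv\T$ iff $(n,\mathbf{v},\gamma)\in{\sf GDF}$.

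The one new ingredient beyond the ${\sf DF}$-hardness argument is the observation that this particular mechanism is gap-free regardless of $\phi$. Indeed, agent $2$ acts last and controls $z_2$, and assigning value $1$ to $z_2$ makes $\gamma$ true no matter what the remaining variables are. Hence, under every action profile $\mathbf{s}_1,\mathbf{s}_2$ with $\gamma[\mathbf{s}_1,\mathbf{s}_2/\mathbf{v}_1,\mathbf{v}_2]\equiv\F$, item~2 of Definition~\ref{responsibility} is witnessed for agent $2$, so agent $2$ is responsible; thus $(n,\mathbf{v},\gamma)\in{\sf GF}$ unconditionally by Definition~\ref{GF definition}.

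Now, because ${\sf GDF}={\sf GF}\cap{\sf DF}$ (a mechanism lies in ${\sf GDF}$ exactly when every violating profile has at least one responsible agent, i.e.\ gap-freeness, and at most one, i.e.\ diffusion-freeness), and our mechanism is always in ${\sf GF}$, we get $(n,\mathbf{v},\gamma)\in{\sf GDF}$ iff $(n,\mathbf{v},\gamma)\in{\sf DF}$. By the claim established inside the proof that ${\sf DF}$ is $\Pi_2$-hard, $(n,\mathbf{v},\gamma)\in{\sf DF}$ iff $\forall\mathbf{x}\exists\mathbf{y}\,\phi\equiv\T$. Chaining these equivalences gives exactly the desired characterisation, completing the reduction and hence the proof.

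I do not expect a serious obstacle here: the crux is the trivial remark that the last agent's ``escape action'' $z_2=1$ forces gap-freeness for this specific $\gamma$, and everything else is inherited. If one instead wanted a fully self-contained argument, the main (routine) work would be re-deriving the two directions of the ${\sf DF}$ analysis: from $\forall\mathbf{x}\exists\mathbf{y}\,\phi\equiv\T$ one checks that agent $1$ has no strategy guaranteeing $\gamma$, so agent $1$ is never responsible, leaving agent $2$ as the unique responsible agent on each violating profile; and from $\forall\mathbf{x}\exists\mathbf{y}\,\phi\equiv\F$ one picks $\mathbf{x}_0$ with $\forall\mathbf{y}\,\neg\phi[\mathbf{x}_0/\mathbf{x}]\equiv\T$ and verifies that on the profile with $\mathbf{x}=\mathbf{x}_0$ and $z_1=z_2=0$ both agent $1$ (via setting $z_1$ to $1$) and agent $2$ (via setting $z_2$ to $1$) are responsible, producing diffusion and hence $(n,\mathbf{v},\gamma)\notin{\sf GDF}$.
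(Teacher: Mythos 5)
Your proof is correct, but it pivots on a different construction than the paper's. The paper builds a fresh two-agent mechanism with $\mathbf{v}_1=\mathbf{x}$, $\mathbf{v}_2=\mathbf{y}\cup\{z\}$ and $\gamma=\phi\wedge z$ (essentially a two-agent version of its {\sf GF}-hardness gadget): there agent~1 can never guarantee $\gamma$, so the mechanism is \emph{unconditionally diffusion-free}, and membership in {\sf GDF} collapses to gap-freeness, i.e.\ to whether agent~2 is responsible on every violating profile, which holds iff $\forall\mathbf{x}\exists\mathbf{y}\,\phi\equiv\T$. You instead reuse the {\sf DF}-hardness gadget $\gamma=(\neg\phi\wedge z_1)\vee z_2$ and observe the mirror-image fact: agent~2's escape action $z_2=1$ makes the mechanism \emph{unconditionally gap-free}, so membership in {\sf GDF} collapses to diffusion-freeness, and you can quote the already-proved equivalence $(n,\mathbf{v},\gamma)\in{\sf DF}$ iff $\forall\mathbf{x}\exists\mathbf{y}\,\phi\equiv\T$. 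Both reductions are from the same $\Pi_2$-complete problem and both check out (your identification ${\sf GDF}={\sf GF}\cap{\sf DF}$ is exactly how the paper introduces {\sf GDF}, and your verification that agent~2 is responsible on every violating profile is sound since agent~2 acts last). What your route buys is economy: the nontrivial equivalence is inherited verbatim from the earlier lemma, and only the easy unconditional gap-freeness observation is new. What the paper's route buys is symmetry with its {\sf GF}-hardness proof and a self-contained argument that does not depend on an auxiliary claim buried inside another lemma's proof; if you wanted your write-up to stand alone, you would fold in the short two-direction analysis you sketch at the end, which is indeed all that is missing.
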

\begin{proof}
Consider any Boolean formula $\phi$ without quantifiers whose variables are divided into two disjoint sets, $\mathbf{x}$ and $\mathbf{y}$. Define mechanism $(n,\mathbf{v},\gamma)$ as follows:
\begin{enumerate}
    \item $n=2$,
    \item $\mathbf{v}_1=\mathbf{x}$, $\mathbf{v}_2=\mathbf{y}\cup\{z\}$, where $z$ is any Boolean variable such that $z\notin \mathbf{x}\cup \mathbf{y}$,
    \item $\gamma=\phi\wedge z$.
\end{enumerate}
Note that $\forall \mathbf{x} \exists \mathbf{y} \phi$ is a closed formula. Thus, it is semantically equivalent to either $\F$ or $\T$.
To prove the statement of the lemma, it suffices to establish the following claim.

\begin{claim}
$\forall \mathbf{x} \exists \mathbf{y}\phi\equiv\T$ iff $(n,\mathbf{v},\gamma)\in {\sf GDF}$.
\end{claim}
\begin{proof-of-claim}
Note that the deontic constraint $\gamma=\phi\wedge z$ has value $0$ if Boolean variable $z$ has value $0$. Thus, the first agent does not have a way to guarantee that the deontic constraint is satisfied. Hence, the first agent is not responsible under any action profile. Then, by Definition~\ref{responsibility} and Definition~\ref{GDF definition}, it suffices to show that $\forall \mathbf{x} \exists \mathbf{y}\phi\equiv\T$ iff the second agent is responsible under each action profile that does not satisfy deontic constraint $\gamma$.

\noindent
$(\Rightarrow):$  The second agent can guarantee that formula $\phi$ has value 1 by the assumption $\forall \mathbf{x} \exists \mathbf{y}\phi\equiv\T$. The second agent can also guarantee that $z$ has a value 1. Thus, the second agent has a strategy that guarantees that the deontic constraint $\gamma=\phi\wedge z$ is satisfied. Therefore, the second agent is responsible under each action profile that does not satisfy deontic constraint $\gamma$.

\noindent
$(\Leftarrow):$ Suppose that $\forall \mathbf{x} \exists \mathbf{y}\phi\equiv\F$. Thus, there is a tuple $\mathbf{x}_0$ such that $(\forall \mathbf{y}\neg\phi)[\mathbf{x}_0/\mathbf{x}]\equiv\T$. Hence, if the first agent chooses action $\mathbf{x}_0$, then the deontic constraint $\gamma=\phi\wedge z$ is not satisfied no matter what the second agent does. Therefore, if the first agent chooses action $\mathbf{x}_0$, then the constraint is not satisfied and, by Definition~\ref{responsibility}, the second agent is not responsible for this.~
\end{proof-of-claim}
This concludes the proof of the lemma.
\end{proof}

The next theorem follows from the two lemmas above.
\begin{theorem}
The set ${\sf GDF}$ is $\Pi_2$-complete.    
\end{theorem}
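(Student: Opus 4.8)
The plan is to obtain the theorem immediately from the two lemmas just established — ${\sf GDF}\in\Pi_2$ and ${\sf GDF}$ is $\Pi_2$-hard — since a set is $\Pi_2$-complete exactly when it lies in $\Pi_2$ and is $\Pi_2$-hard. So the real content is the plan behind those two lemmas.

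For membership, I would construct, by backward induction on the number $k$ of agents whose moves have already been fixed, a formula ${\sf gdf}_k$ in the free variables $\mathbf{v}_1,\dots,\mathbf{v}_k$ with ${\sf gdf}_k[\mathbf{s}_1,\dots,\mathbf{s}_k/\mathbf{v}_1,\dots,\mathbf{v}_k]\equiv\T$ iff the partial mechanism $M_k(\mathbf{s}_1,\dots,\mathbf{s}_k)$ is gap-and-diffusion-free; then the closed formula ${\sf gdf}_0$ characterises membership in ${\sf GDF}$, as in equation~\eqref{gdfi formula}. The recursion splits on whether the current leading agent $k+1$ can force $\gamma$. If it can, Lemma~\ref{green layer lemma} reduces GDF-ness of $M_k$ to ${\sf RF}$-ness of every child $M_{k+1}$, which by Lemma~\ref{RF lemma} is the condition $\forall\mathbf{v}_{k+1}\,{\sf rf}_{k+1}$. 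If it cannot, agent $1$ is never responsible, so by Lemma~\ref{no green layer lemma} GDF-ness propagates to all children, giving $\forall\mathbf{v}_{k+1}\,{\sf gdf}_{k+1}$. The delicate step is checking that the result stays at quantifier-alternation depth two: ${\sf rf}_{k+1}$ is $\Pi_2$ by equation~\eqref{rf}; the negation of the ``can force'' selector $\exists\mathbf{v}_{k+1}\forall\mathbf{v}_{k+2}\dots\mathbf{v}_n\gamma$ is $\Pi_2$; a block of universal quantifiers in front of a $\Pi_2$ formula is $\Pi_2$; and — the one genuinely subtle point — in the ``can force'' branch the $\Sigma_2$ selector may be replaced by the $\Sigma_1$ formula $\exists\mathbf{v}_{k+1}\dots\exists\mathbf{v}_n\gamma$, because once $\forall\mathbf{v}_{k+1}\,{\sf rf}_{k+1}$ holds the value of $\gamma$ no longer depends on $\mathbf{v}_{k+2},\dots,\mathbf{v}_n$.

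For $\Pi_2$-hardness, I would reduce from the standard $\Pi_2$-complete set $\{\phi:\forall\mathbf{x}\exists\mathbf{y}\,\phi\equiv\T\}$: given quantifier-free $\phi$ on disjoint $\mathbf{x},\mathbf{y}$, take the two-agent mechanism with $\mathbf{v}_1=\mathbf{x}$, $\mathbf{v}_2=\mathbf{y}\cup\{z\}$ for a fresh variable $z$, and $\gamma=\phi\wedge z$. Because $z$ is controlled only by agent $2$, agent $1$ can never guarantee $\gamma$ and so is never responsible; hence the mechanism is in ${\sf GDF}$ iff agent $2$ is responsible under every profile with $\gamma\equiv\F$, i.e. iff from every value of $\mathbf{x}$ agent $2$ has a reply making $\phi\wedge z$ true, i.e. iff $\forall\mathbf{x}\exists\mathbf{y}\,\phi\equiv\T$. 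This map is plainly log-space computable.

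The main obstacle is the $\Pi_2$ upper bound. Since ${\sf GDF}={\sf GF}\cap{\sf DF}$ and ${\sf GF}$ is only $\Pi_3$, and since the selector $\exists\mathbf{v}_{k+1}\forall\mathbf{v}_{k+2}\dots\mathbf{v}_n\gamma$ driving the recursion is itself $\Sigma_2$, the naive bound is $\Pi_3$; the work lies in showing that the green-/no-green-layer recursion of Lemmas~\ref{green layer lemma}--\ref{no green layer lemma} faithfully captures ${\sf GDF}$ and that, inside that recursion, the $\Sigma_2$ selector can be softened to a $\Sigma_1$ test, so that no third quantifier block is ever needed. The hardness direction, by contrast, is a routine variant of the reduction already used for ${\sf DF}$.
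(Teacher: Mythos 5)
Your proposal is correct and follows essentially the same route as the paper: the same backward-inductive formula ${\sf gdf}_k$ splitting on whether agent $k+1$ can force $\gamma$ (Lemmas~\ref{green layer lemma} and~\ref{no green layer lemma} plus Lemma~\ref{RF lemma}), the same observation that under $\forall\mathbf{v}_{k+1}\,{\sf rf}_{k+1}$ the $\Sigma_2$ selector collapses to a purely existential test so that ${\sf gdf}_k$ stays in $\Pi_2$, and the identical two-agent reduction with $\gamma=\phi\wedge z$ for hardness. No gaps to report.
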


\section{Conclusion}\label{conclusion section}

Responsibility gap and diffusion of responsibility are two usually undesirable phenomena in decision-making mechanisms. Sometimes the responsibility gap can be eliminated by a mechanism that requires agents to provide their input in order, rather than acting simultaneously. However, introducing an order might lead to the diffusion of responsibility even if it is not present in the simultaneous setting. In this article, we studied the computational complexity of detecting responsibility gaps and diffusion of responsibility in a setting where all agents act in a fixed order. The main technical results are that the sets of diffusion-free and gap-free mechanisms are, respectively, $\Pi_2$- and $\Pi_3$-complete and that the intersection of these two sets is $\Pi_2$-complete.

These results show that gap-free, diffusion-free, and gap-and-diffusion-free mechanisms not only exist in abundance, but also that the structure of these three classes of mechanisms is non-trivial. At the same time, these results prove that designing responsible mechanisms cannot be done by a brute-force method or simply by luck of choosing the right one and verifying that it works. Instead, the design of decision-making mechanisms that enforce individual accountability is an endeavor that requires ingenuity.

\bibliographystyle{elsarticle-num}

\bibliography{naumov}

\begin{thebibliography}{10}
\expandafter\ifx\csname url\endcsname\relax
  \def\url#1{\texttt{#1}}\fi
\expandafter\ifx\csname urlprefix\endcsname\relax\def\urlprefix{URL }\fi
\expandafter\ifx\csname href\endcsname\relax
  \def\href#1#2{#2} \def\path#1{#1}\fi

\bibitem{bh18ej}
M.~Braham, M.~van Hees, Voids or fragmentation: Moral responsibility for
  collective outcomes, The Economic Journal 128~(612) (2018) F95--F113.

\bibitem{dp22scw}
H.~Duijf, F.~van De~Putte, The problem of no hands: responsibility voids in
  collective decisions, Social Choice and Welfare 58~(4) (2022) 753--790.

\bibitem{nt25ijcai}
P.~Naumov, J.~Tao, Responsibility gap in collective decision making, in:
  Proceedings of the Thirty-Fourth International Joint Conference on Artificial
  Intelligence, 2025.

\bibitem{f69tjop}
H.~G. Frankfurt, Alternate possibilities and moral responsibility, The Journal
  of Philosophy 66~(23) (1969) 829--839.
\newblock \href {http://dx.doi.org/10.2307/2023833}
  {\path{doi:10.2307/2023833}}.

\bibitem{w17}
D.~Widerker, M.~McKenna (Eds.), Moral responsibility and alternative
  possibilities: Essays on the importance of alternative possibilities,
  Ashgate, Burlington, VT, 2003.

\bibitem{c15cop}
F.~Cushman, Deconstructing intent to reconstruct morality, Current Opinion in
  Psychology 6 (2015) 97--103.

\bibitem{lewis13}
D.~Lewis, Counterfactuals, John Wiley \& Sons, 2013.

\bibitem{h16}
J.~Y. Halpern, Actual causality, MIT Press, 2016.

\bibitem{bs18aaai}
V.~Batusov, M.~Soutchanski, Situation calculus semantics for actual causality,
  in: Proceedings of the Thirty-Second AAAI Conference on Artificial
  Intelligence (AAAI-18), 2018.

\bibitem{ydjal19aamas}
V.~Yazdanpanah, M.~Dastani, W.~Jamroga, N.~Alechina, B.~Logan, Strategic
  responsibility under imperfect information, in: Proceedings of the 18th
  International Conference on Autonomous Agents and MultiAgent Systems,
  International Foundation for Autonomous Agents and Multiagent Systems, 2019,
  pp. 592--600.

\bibitem{nt19aaai}
P.~Naumov, J.~Tao, Blameworthiness in strategic games, in: Proceedings of
  Thirty-third AAAI Conference on Artificial Intelligence (AAAI-19), 2019.

\bibitem{nt20aaai}
P.~Naumov, J.~Tao, Blameworthiness in security games, in: Proceedings of
  Thirty-Fourth AAAI Conference on Artificial Intelligence (AAAI-20), 2020.

\bibitem{bfm21ijcai}
C.~Baier, F.~Funke, R.~Majumdar, A game-theoretic account of responsibility
  allocation, in: 30th International Joint Conference on Artificial
  Intelligence (IJCAI-21), 2021.

\bibitem{s24aaai}
Q.~Shi, Responsibility in extensive form games, in: Proceedings of 38th AAAI
  Conference on Artificial Intelligence (AAAI-24), 2024.

\bibitem{ms75pspb}
C.~Mynatt, S.~J. Sherman, Responsibility attribution in groups and individuals:
  A direct test of the diffusion of responsibility hypothesis., Journal of
  Personality and Social Psychology 32~(6) (1975) 1111.

\bibitem{fzg02pspb}
D.~R. Forsyth, L.~E. Zyzniewski, C.~A. Giammanco, Responsibility diffusion in
  cooperative collectives, Personality and Social Psychology Bulletin 28~(1)
  (2002) 54--65.

\bibitem{llw22ichssr}
D.~Liu, X.~Liu, S.~Wu, A literature review of diffusion of responsibility
  phenomenon, in: 2022 8th International Conference on Humanities and Social
  Science Research (ICHSSR 2022), Atlantis Press, 2022, pp. 1806--1810.

\bibitem{i20ijcr}
I.~Iusmen, Whose children? protecting unaccompanied migrant children in
  {E}urope: A case of diffused responsibility?, The International Journal of
  Children's Rights 28~(4) (2020) 925--949.

\bibitem{rkfc22jrcd}
Z.~R. Rowan, E.~Kan, P.~J. Frick, E.~Cauffman, Not (entirely) guilty: The role
  of co-offenders in diffusing responsibility for crime, Journal of Research in
  Crime and Delinquency 59~(4) (2022) 415--448.

\bibitem{bb22aie}
H.~Bleher, M.~Braun, Diffused responsibility: attributions of responsibility in
  the use of {A}{I}-driven clinical decision support systems, AI and Ethics
  2~(4) (2022) 747--761.

\bibitem{fdlglk16hbm}
C.~Feng, G.~Deshpande, C.~Liu, R.~Gu, Y.-J. Luo, F.~Krueger, Diffusion of
  responsibility attenuates altruistic punishment: A functional magnetic
  resonance imaging effective connectivity study, Human brain mapping 37~(2)
  (2016) 663--677.

\bibitem{zgl12cognition}
R.~Zultan, T.~Gerstenberg, D.~A. Lagnado, Finding fault: causality and
  counterfactuals in group attributions, Cognition 125~(3) (2012) 429--440.

\bibitem{m04eit}
A.~Matthias, The responsibility gap: Ascribing responsibility for the actions
  of learning automata, Ethics and information technology 6 (2004) 175--183.

\bibitem{ct15pt}
M.~Champagne, R.~Tonkens, Bridging the responsibility gap in automated warfare,
  Philosophy \& Technology 28 (2015) 125--137.

\bibitem{bhlmmp20ai}
S.~Burton, I.~Habli, T.~Lawton, J.~McDermid, P.~Morgan, Z.~Porter, Mind the
  gaps: Assuring the safety of autonomous systems from an engineering, ethical,
  and legal perspective, Artificial Intelligence 279 (2020) 103201.

\bibitem{c20see}
M.~Coeckelbergh, Artificial intelligence, responsibility attribution, and a
  relational justification of explainability, Science and engineering ethics
  26~(4) (2020) 2051--2068.

\bibitem{g20eit}
D.~J. Gunkel, Mind the gap: responsible robotics and the problem of
  responsibility, Ethics and Information Technology 22~(4) (2020) 307--320.

\bibitem{sm21pt}
F.~Santoni~de Sio, G.~Mecacci, Four responsibility gaps with artificial
  intelligence: Why they matter and how to address them, Philosophy \&
  Technology 34~(4) (2021) 1057--1084.

\bibitem{t21pt}
D.~W. Tigard, There is no techno-responsibility gap, Philosophy \& Technology
  34~(3) (2021) 589--607.

\bibitem{k22eit}
P.~K{\"o}nigs, Artificial intelligence and responsibility gaps: what is the
  problem?, Ethics and Information Technology 24~(3) (2022) 36.

\bibitem{o23pt}
A.-K. Oimann, The responsibility gap and laws: A critical mapping of the
  debate, Philosophy \& Technology 36~(1) (2023) 3.

\bibitem{hv23synthese}
F.~Hindriks, H.~Veluwenkamp, The risks of autonomous machines: from
  responsibility gaps to control gaps, Synthese 201~(1) (2023) 21.

\bibitem{l21pt}
C.~List, Group agency and artificial intelligence, Philosophy \& technology
  34~(4) (2021) 1213--1242.

\bibitem{ygcsnj21ieee}
V.~Yazdanpanah, E.~H. Gerding, S.~Stein, C.~C{\^{\i}}rstea, m.~c. schraefel,
  T.~J. Norman, N.~R. Jennings,
  \href{https://doi.org/10.1109/MIC.2021.3107334}{Different forms of
  responsibility in multiagent systems: Sociotechnical characteristics and
  requirements}, {IEEE} Internet Comput. 25~(6) (2021) 15--22.
\newblock \href {http://dx.doi.org/10.1109/MIC.2021.3107334}
  {\path{doi:10.1109/MIC.2021.3107334}}.
\newline\urlprefix\url{https://doi.org/10.1109/MIC.2021.3107334}

\bibitem{ygsdjn21aamas}
V.~Yazdanpanah, E.~H. Gerding, S.~Stein, M.~Dastani, C.~M. Jonker, T.~J.
  Norman, \href{https://dl.acm.org/doi/10.5555/3463952.3463964}{Responsibility
  research for trustworthy autonomous systems}, in: F.~Dignum, A.~Lomuscio,
  U.~Endriss, A.~Now{\'{e}} (Eds.), {AAMAS} '21: 20th International Conference
  on Autonomous Agents and Multiagent Systems, Virtual Event, United Kingdom,
  May 3-7, 2021, {ACM}, 2021, pp. 57--62.
\newline\urlprefix\url{https://dl.acm.org/doi/10.5555/3463952.3463964}

\bibitem{dy23as}
M.~Dastani, V.~Yazdanpanah, Responsibility of ai systems, Ai \& Society 38~(2)
  (2023) 843--852.

\bibitem{ygsdjnr23ais}
V.~Yazdanpanah, E.~H. Gerding, S.~Stein, M.~Dastani, C.~M. Jonker, T.~J.
  Norman, S.~D. Ramchurn, Reasoning about responsibility in autonomous systems:
  challenges and opportunities, AI \& SOCIETY 38~(4) (2023) 1453--1464.

\bibitem{bh11pq}
M.~Braham, M.~van Hees,
  \href{https://doi.org/10.1111/j.1467-9213.2010.677.x}{{Responsibility
  Voids}}, The Philosophical Quarterly 61~(242) (2011) 6--15.
\newblock \href {http://dx.doi.org/10.1111/j.1467-9213.2010.677.x}
  {\path{doi:10.1111/j.1467-9213.2010.677.x}}.
\newline\urlprefix\url{https://doi.org/10.1111/j.1467-9213.2010.677.x}

\bibitem{d18pss}
H.~Duijf, Responsibility voids and cooperation, Philosophy of the social
  sciences 48~(4) (2018) 434--460.

\bibitem{d22}
H.~Duijf, The logic of responsibility voids, Springer, 2022.

\bibitem{b11jal}
J.~Broersen, Deontic epistemic {S}{T}{I}{T} logic distinguishing modes of mens
  rea, Journal of Applied Logic 9~(2) (2011) 137--152.

\bibitem{b11jpl}
J.~M. Broersen, Making a start with the {S}{T}{I}{T} logic analysis of
  intentional action, Journal of Philosophical Logic 40~(4) (2011) 499--530.

\bibitem{nt23apal}
P.~Naumov, J.~Tao, Counterfactual and seeing-to-it responsibilities in
  strategic games, Annals of Pure and Applied Logic 174~(10) (2023) 103353.

\bibitem{bp90krdr}
N.~Belnap, M.~Perloff, Seeing to it that: A canonical form for agentives, in:
  Knowledge representation and defeasible reasoning, Springer, 1990, pp.
  167--190.

\bibitem{h01}
J.~F. Horty, Agency and deontic logic, Oxford University Press, Oxford,
  England, 2001.

\bibitem{h95jpl}
J.~F. Horty, N.~Belnap, The deliberative {S}{T}{I}{T}: A study of action,
  omission, ability, and obligation, Journal of Philosophical Logic 24~(6)
  (1995) 583--644.

\bibitem{hp17rsl}
J.~Horty, E.~Pacuit, Action types in {S}{T}{I}{T} semantics, The Review of
  Symbolic Logic 10~(4) (2017) 617--637.

\bibitem{ow16sl}
G.~K. Olkhovikov, H.~Wansing, Inference as doxastic agency. {P}art {I}: The
  basics of justification {S}{T}{I}{T} logic, Studia Logica 107~(1) (2019)
  167--194.

\end{thebibliography}

\end{document}